\newtheorem{theorem}{Theorem}
\newtheorem{lemma}{Lemma}
\newcommand{\subvalue}[2]{U_{#1}^{#2}}
\newcommand{\subvalueonly}{U}
\icmltitlerunning{Learning Independently-Obtainable Reward Functions}
\begin{document}

\twocolumn[
\icmltitle{Learning Independently-Obtainable Reward Functions}

% It is OKAY to include author information, even for blind
% submissions: the style file will automatically remove it for you
% unless you've provided the [accepted] option to the icml2019
% package.

% List of affiliations: The first argument should be a (short)
% identifier you will use later to specify author affiliations
% Academic affiliations should list Department, University, City, Region, Country
% Industry affiliations should list Company, City, Region, Country

% You can specify symbols, otherwise they are numbered in order.
% Ideally, you should not use this facility. Affiliations will be numbered
% in order of appearance and this is the preferred way.
\icmlsetsymbol{equal}{*}

\begin{icmlauthorlist}
\icmlauthor{Christopher Grimm}{umich}
\icmlauthor{Satinder Singh}{umich}
\end{icmlauthorlist}

\icmlaffiliation{umich}{University of Michigan
Ann Arbor, MI 48109}

\icmlcorrespondingauthor{Christopher Grimm}{crgrimm@umich.edu}
\icmlcorrespondingauthor{Satinder Singh}{baveja@umich.edu}

% You may provide any keywords that you
% find helpful for describing your paper; these are used to populate
% the "keywords" metadata in the PDF but will not be shown in the document
\icmlkeywords{Machine Learning, ICML}

\vskip 0.3in
]

% this must go after the closing bracket ] following \twocolumn[ ...

% This command actually creates the footnote in the first column
% listing the affiliations and the copyright notice.
% The command takes one argument, which is text to display at the start of the footnote.
% The \icmlEqualContribution command is standard text for equal contribution.
% Remove it (just {}) if you do not need this facility.

\printAffiliationsAndNotice{}  % leave blank if no need to mention equal contribution
%\printAffiliationsAndNotice{\icmlEqualContribution} % otherwise use the standard text.

\begin{abstract}
We present a novel method for learning a set of disentangled reward functions that sum to the original environment reward and are \textit{constrained to be independently obtainable}. We define independent obtainability in terms of value functions with respect to obtaining one learned reward while pursuing another learned reward. 
Empirically, we illustrate that our method can learn meaningful reward decompositions in a variety of domains and that these decompositions exhibit some form of generalization performance when the environment's reward is modified. Theoretically, we derive results about the effect of maximizing our method's objective on the resulting reward functions and their corresponding optimal policies. 
%Anonymized source-code is available in the supplemental material.
\end{abstract}

\section{Introduction \& Related Work}

%There has been much interest in learning representations that separate and expose the essential features in the domain to which they are applied (e.g., \citealt{chen2016infogan,bengio2017independently,higgins2016beta}). Some methods are motivated by the goal of learning representations that are in some sense generally useful or interpretable. Yet other methods aim to learn internal representations of the world  that separate or ``disentangle'' aspects of the domain that are, in some sense, \textit{independent} of one another. This notion of independence varies across these works, ranging from the classical probabilistic definition to more customized disentanglement metrics (e.g., \citealt{bengio2017independently}). 

We introduce a novel method for discovering disentangled structure in the reinforcement learning (RL) setting
by learning a decomposition of the agent's reward function,
%. This decomposition is encouraged to be learned 
such that the pursuit of one decomposed reward does not result in the collection of another, i.e., the individual rewards in the decomposition are \textbf{independently-obtainable\/}. In the following sections we briefly discuss related work, detail our method, and empirically and theoretically explore the resulting structure learned from the reward decomposition.
%and suggest future applications of ours and related reward decomposition methods. 

\paragraph{Related Work}
When viewed through the lens of RL, methods for learning disentangled representations can be classified according to how they utilize their learned representations. 

Some methods seek robust interpretable disentangled features \cite{NIPS2018_8193,NIPS2016_6051,pmlr-v80-yingzhen18a,tran2017disentangled}. For example, \citet{higgins2016beta} does so by creating an ``information bottleneck'' \cite{tishby2015deep} that pressures the latent representation to be unit Gaussian. \citet{chen2016infogan} accomplishes a similar goal by maximizing the mutual information between components of their latent representation and other independent random variables. 
Methods such as these have been leveraged in RL to decompose the state of the environment. In particular, \citet{laversanne2018curiosity} have applied $\beta$-VAE to learn disentangled, ``modular'' representations of the environment state for use in many goals RL~\cite{kaelbling1993learning}. 

However, in \citet{laversanne2018curiosity}, the decomposition of environment state and learning corresponding control policies are treated as separate processes. There has been other work that explores jointly learning state decompositions and corresponding policies. \citet{thomas2017independently} defines an alternative notion of disentanglement: ``independent controllability'' which pairs together components of the learned state representation with control policies and measures the degree to which policies can control their corresponding components independently of other components. 

While \citet{laversanne2018curiosity} and \citet{thomas2017independently} each leverage some notion of disentanglement to address RL problems, their methods do not take into account the reward function. This motivates our exploration into directly decomposing the reward function of the environment (of course, rewards are functions of state and have corresponding policies and hence decomposing rewards also implicitly decomposes states as well as policies). 

In addition to those that decompose parts of the environment, many methods exist that exploit existing decompositions. \citet{guestrin2002multiagent} and \citet{kok2006collaborative} rely on existing state factorizations to efficiently coordinate agents in the multi-agent learning setting \cite{hu1998multiagent}; \citet{van2017hybrid} and \citet{russell2003q} propose methods of learning given an existing reward factorization. Such works are complementary to our own.

\section{RL Framework \& Reward Decomposition}

\paragraph{RL Framework} We consider RL problems formulated as Markov Decision Processes (\textbf{MDPs}; \citealt{sutton1998reinforcement}), which we represent here by the tuples: $\langle \mathcal{S}, \mathcal{A}, \mathcal{R}, \mathcal{T}, \gamma
\rangle$, 
where $\mathcal{S}$ is a set of states that the environment can be in, $\mathcal{A}$ is a set of actions that an agent operating in the environment can perform at any time, $\mathcal{R} : \mathcal{S} \mapsto \mathbb{R}$ is a function mapping environment states to their corresponding rewards, $\mathcal{T} : \mathcal{S} \times \mathcal{A} \times \mathcal{S} \mapsto [0,1]$ represents the probability of transitioning between two particular states when an action is taken, and $0 \leq \gamma < 1$ is a discount factor that makes future reward less valuable than more immediate reward.

The goal of an RL agent is to act so as to maximize the expected discounted reward, or value, over an infinite horizon, defined as follows:
\begin{equation*}
V^\pi(s) = \mathbb{E}\left[\sum_{t=0}^\infty \gamma^t \mathcal{R}(s_t)  \ \Bigg|\ \pi, s_0 = s \right],
\end{equation*}
where the expectation is over trajectories generated by starting at state $s$ in the environment and behaving according to the policy $\pi$ (a policy maps states to actions or more generally distributions over actions). 
RL methods learn optimal policies defined
as follows:
$\pi^* = \underset{\pi}{\text{argmax}}\  V^\pi.$

\paragraph{Reward Decomposition} The central aim of our method is to learn an \emph{additive} factorization (henceforth, decomposition) of the reward function $\mathcal{R}$ of the following form:
\begin{equation}
\mathcal{R}(s) = \sum_{i=1}^n \mathcal{R}_i(s)
\label{eq:reward_factorization}
\end{equation} 
where each $\mathcal{R}_i(s) : \mathcal{S} \mapsto \mathbb{R}$ can be thought of as a type of sub-reward (below we add constraints that make these sub-rewards independently obtainable).

Given such a decomposition, 
%the resulting MDP can be formulated as a 
we get a Factored Markov Decision Process (\textbf{fMDP}; \citealt{degris2013factored}) as follows: $\langle \mathcal{S}, \mathcal{A}, \mathcal{R}_1, \ldots, \mathcal{R}_n, \mathcal{T}, \gamma \rangle$. 
%where we have simply augmented our previous definition with the structure imposed by the reward decomposition given in Equation~\ref{eq:reward_factorization}. 
%In our fMDP, we can consider both 
Value functions and policies with respect to pursuing individual reward functions in the fMDP can be defined as follows:
\begin{equation}
\begin{aligned}
&\subvalue{i}{\pi}(s) = \mathbb{E}\left[ \sum_{t=0}^\infty \gamma^t \mathcal{R}_i(s_t)  \ \Bigg|\  \pi, s_0 = s \right],  \, \mbox{and} \,\\
&\pi^*_i = \underset{\pi}{\text{argmax}} \ \subvalue{i}{\pi}.
\end{aligned}
\end{equation}\label{eq:factor_value_function}
%These values and policies collectively encode the behaviors of individually pursuing a single factor of the environment's reward function. 
\textit{Note} that throughout we will use $\{\subvalueonly_i\}$ to denote value functions for the \emph{learned} rewards $\{\mathcal{R}_i\}$ and $V$ for the value functions for the \emph{environment} reward $\mathcal{R}$.

\paragraph{Motivating and Defining Disentangled Reward Decompositions}
Notice that for any MDP there exist an endless array of different decompositions, many of them ``uninteresting''. 
%. Some of these might separate or disentangle interesting phenomena in the environment, however there are also infinitely many uninteresting decompositions. As examples, 
%$\langle \mathcal{S}, \mathcal{A}, \mathcal{R}, \mathcal{T}, \mathcal{E} \rangle$, 
%consider the following two ``uninteresting'' but valid (in that they sum to the original reward function) decompositions: 
For example, the following two decompositions:
\begin{equation*}
\begin{aligned}
&\mathcal{R}_i(s) = \frac{1}{n} \mathcal{R}(s) \ \ \forall i \in [n], \, \mbox{and} \\ &\mathcal{R}_i(s) = \mathcal{R}(s) \text{ if } i = 1 \text{ else } 0,
\end{aligned}
\end{equation*}
are valid in that they sum to the environment reward function, but uninteresting in that they encode no additional information about the environment not present in the unfactored reward function. 

%So, what are desirable properties of a reward decomposition that should in general prevent it from being uninteresting? 
In this paper, we propose that one approach to encouraging ``interesting'' reward decompositions is for them to satisfy the following two desiderata as best possible:
\begin{enumerate}
\item Each reward should be able to be obtained independently of other rewards. (i.e., the policy that optimally obtains $\mathcal{R}_i$ should not obtain $\mathcal{R}_j$ for any $i \neq j$). 

\item Each reward should be non-trivial. (an example of a trivial reward 
is of the form $\mathcal{R}_i(s) = 0 \ \ \forall s \in \mathcal{S}$).
\end{enumerate}
We can roughly codify these properties using the factor-specific value functions defined in the following equations:
%Equation~\ref{eq:factor_value_function}. Specifically, we consider the following quantities as functions of a given reward decomposition:
\begin{equation}
J_\text{independent}(\mathcal{R}_1, \ldots, \mathcal{R}_n)  = \mathbb{E}_{s \sim \mu} \left[ \sum_{i \neq j} \alpha_{i,j}(s) \subvalue{i}{\pi_j^*}(s)\right]
\label{eq:J_indep}
\end{equation}
\begin{equation}
J_\text{nontrivial}(\mathcal{R}_1, \ldots, \mathcal{R}_n) = \mathbb{E}_{s \sim \mu} \left[ \sum_{i=1}^n \alpha_{i,i}(s) \subvalue{i}{\pi_i^*}(s) \right]
\label{eq:J_nontriv}
\end{equation}
where $\mu$ is some distribution with support on $\mathcal{S}$  and $\alpha_{i,j}(s)$ are functions that can be used to control the weighting of different value function terms in $J_\text{independent}$ and $J_\text{nontrivial}$. For ease of exposition we set $\alpha_{i,j}(s) = 1$ for all $i,j \in [n]$ and $s \in \mathcal{S}$ in subsequent equations (for our choice of $\alpha_{i,j}$ in our experiments, see Section~\ref{sec:illustrativeresults}) .

%One way to think about reward decompositions that respect the constraints in Equations~\ref{eq:J_indep} and \ref{eq:J_nontriv} 
Intuitively, the above two desiderata make it possible to view each reward function as a different ``resource'' that an agent can collect in the world. 
%Under this interpretation, it is intuitive to think about 
$J_\text{independent}$ then encodes the degree to which the $i$'th resource (reward) is collected when the agent is attempting to collect the $j (\neq i)$'th resource (reward). In line with our first desideratum, we should expect ``interesting'' reward decompositions to have small values of $J_\text{independent}$. 
Similarly, 
%we can think of 
large values of $J_\text{nontrivial}$ should ensure that each of the factors encodes something about the environment: that they are non-trivial under the second desideratum. 
%Accordingly, we would expect an interesting reward decomposition to have a high value of $J_\text{nontrivial}$. 

Capturing the two desiderata, we define one reward decomposition as more disentangled than another if it has a \emph{larger} value of 
\begin{equation*}
J_\text{disentangled} = J_\text{nontrivial} - J_\text{independent}.
\end{equation*}
Next we present our method for learning reward decompositions in MDPs by maximizing $J_\text{disentangled}$.

%Collectively, this motivates a way of ordering different reward decompositions in terms of the degree to which they capture  independent and nontrivial information about the environment %expressed mathematically 
%as:
%\begin{equation*}
%J_\text{disentangled} = J_\text{nontrivial} - J_\text{independent}
%\end{equation*}
%\emph{where we say that one reward decomposition is more disentangled than another if it has a larger value of} $J_\text{disentangled}$. 

\section{Proposed Method}

We use a parameterized \emph{reward decomposition network} (neural network with parameters $\theta$) that learns a function $F^\theta : \mathcal{S} \mapsto \mathbb{R}^n$. The outputs of this function are used to define a reward decomposition through a softmax function
\begin{equation} \label{eq:softmax_parameterization}
    R^\theta_i(s) = \mathcal{R}(s) \cdot \frac{\exp(F^\theta_i(s))}{\sum_{j=1}^n \exp(F^\theta_j(s))},
\end{equation}
where $\mathcal{R}$ is the environment reward function. The vector of decomposed rewards is thus
$R^\theta(s) = (R_1^\theta(s), \ldots, R_n^\theta(s))$.
See Figure~\ref{fig:architecture}a for a visualization.

Note that $\theta$ defines the reward decomposition which in turn defines the matrix of value functions used in the definition of $J_\text{disentangled}$ and so for ease of notation we can refer to 
$J_\text{disentangled}(R_1^\theta, \ldots, R_n^\theta)$ as just $J_\text{disentangled}(\theta)$. We similarly abbreviate $J_\text{nontriv}$ and $J_\text{indep}$ when appropriate.

We use approximate gradient ascent to update the parameters $\theta$ as follows:
\begin{equation*}
\begin{aligned}
\nabla_\theta J_\text{disentangled}(\theta) &= \nabla_\theta\left(J_\text{nontriv} - J_\text{indep}\right) \\
&= \mathbb{E}\left[\sum_{i=1}^n \nabla_\theta \subvalue{i}{\pi_i^*}(s) - \sum_{i \neq j} \nabla_\theta \subvalue{i}{\pi_j^*}(s)\right]
\end{aligned}
\end{equation*}
\begin{equation}
\begin{aligned}
\nabla_\theta \subvalue{i}{\pi_j^*}(s) &= \nabla_\theta \mathbb{E} \left[ \sum_{t=0}^\infty \gamma^t R_i^\theta(s_t) \  \Bigg|\  \pi_j^*, s_0 = s\right] \\
%&= \mathbb{E}\left[ \sum_{t=0}^\infty \nabla_\theta \gamma^t R_i^\theta(s_t) \ \Bigg|\ \pi_j^*, s_0 = s\right] \\
&\approx \mathbb{E}\left[ \sum_{t=0}^T \nabla_\theta \gamma^t R_i^\theta(s_t) \ \Bigg|\ \pi_j^*, s_0 = s \right],
\end{aligned}
\label{eqn:gradient}
\end{equation}
where $T$ is a cutoff on the number of time-steps that a trajectory is rolled out for and $\nabla_\theta R_i^\theta(s_t)$ depends on the details of the form of the $F^\theta$ functions (which will be neural networks in our empirical work below).

Notice that computing the gradient of the disentanglement objective with respect to $\theta$ requires learning the optimal policies for each factor of the reward decomposition after each change to $\theta$.  In practice, for sample efficiency, we do incremental updates of the policies and value functions as we adapt $\theta$. We use Deep Q-Networks (\textbf{DQNs}; \citealt{mnih2015human}) to learn optimal policies for the decomposed rewards (collectively called the \emph{Policy Networks}; see Figure~\ref{fig:architecture}b). Additionally, in order to compute $J_\text{disentangled}$ we must be able to collect multiple trajectories from the same starting state (see Equation~\ref{eqn:gradient}). This requires that our environments be resettable to specific states (more details for our specific implementations are in Algorithm~\ref{alg:low_level} presented in Appendix~\ref{app:algorithm}). 

\begin{figure}
\centering
\begin{subfigure}
\centering
\includegraphics[scale=0.2]{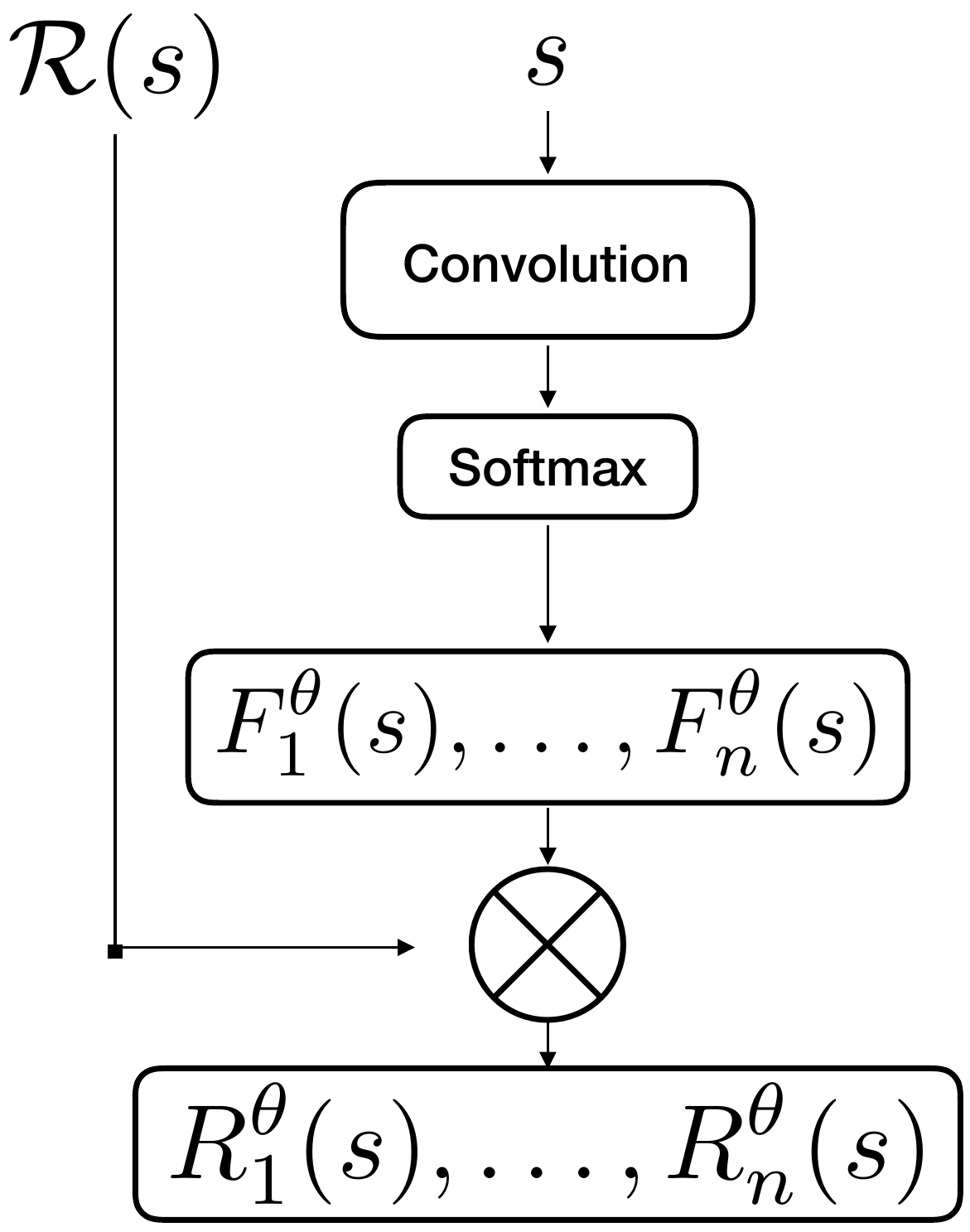}
\end{subfigure}
~
\begin{subfigure}
\centering
\includegraphics[scale=0.2]{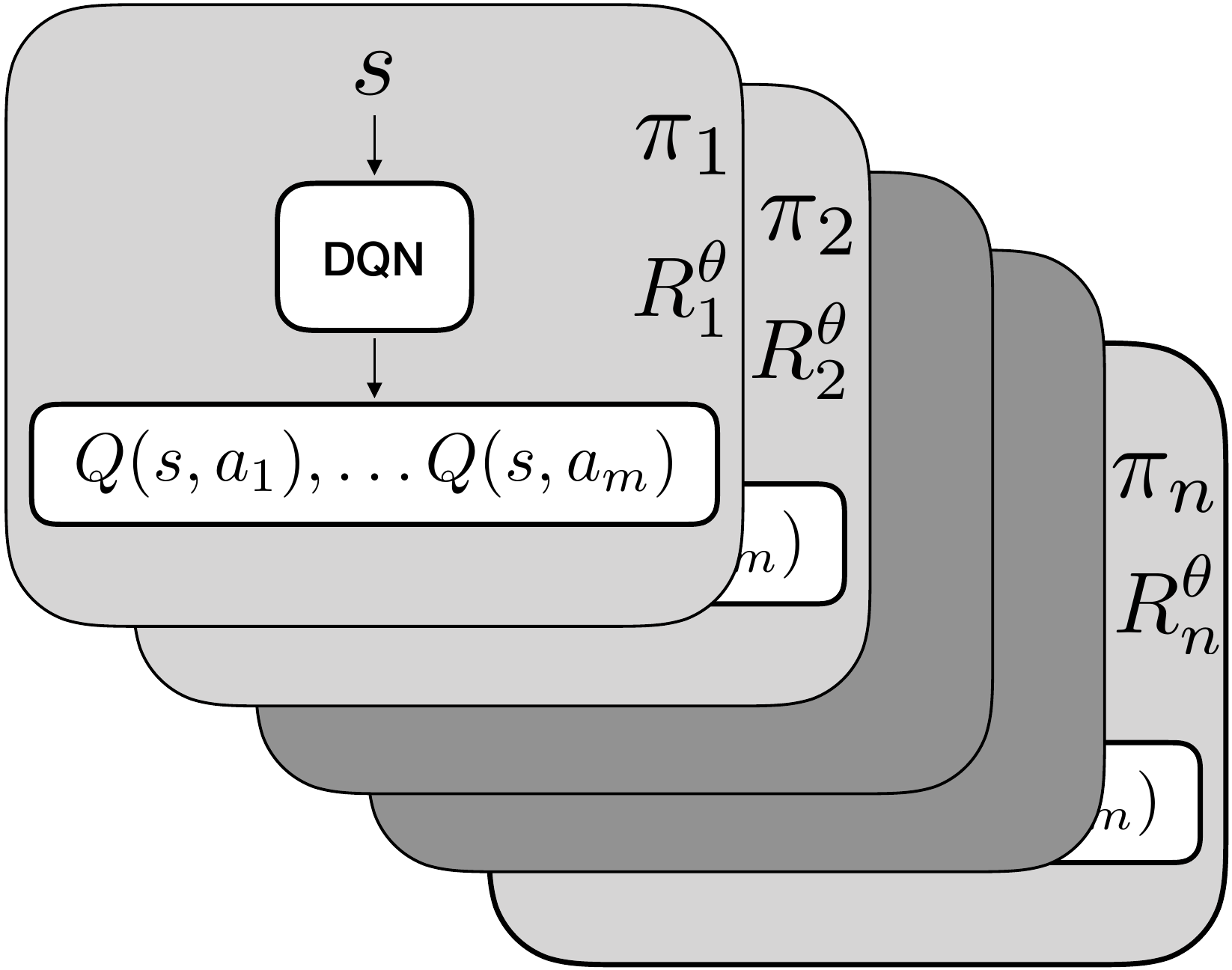}
\end{subfigure}
\caption{ A representation of the architecture used in Algorithm~\ref{alg:low_level} presented in Appendix~\ref{app:algorithm}. Left: our ``reward decomposition network'' which takes a state $s$ and produces $F^\theta  = (F_1^\theta(s), \ldots, F_n^\theta(s)))$. The environment reward is then applied as in Equation~\ref{eq:softmax_parameterization} to produce the decomposed rewards $(R_1^\theta(s), \ldots, R_n^\theta(s))$. Right: collection of DQNs: $Q^{\phi_1}, \ldots Q^{\phi_n}$ that learn policies for each decomposed reward.} \label{fig:architecture}
\end{figure}

% \begin{figure}
% \centering
% \includegraphics[scale=0.2]{partition_network.pdf}
% \caption{ A representation of the architecture used in Algorithm~\ref{alg:low_level} presented in Appendix~\ref{app:algorithm}. (a) Our ``reward decomposition network'' which takes a state $s$ and produces $F^\theta  = (F_1^\theta(s), \ldots, F_n^\theta(s)))$. The environment reward is then applied as in Equation~\ref{eq:softmax_parameterization} to produce the decomposed rewards $(R_1^\theta(s), \ldots, R_n^\theta(s))$. (b) Collection of DQNs: $Q^{\phi_1}, \ldots Q^{\phi_n}$ that learn policies for each decomposed reward.} \label{fig:architecture}
% \end{figure}

\begin{figure*}
\centering
\begin{subfigure}
\centering
\includegraphics[scale=0.18]{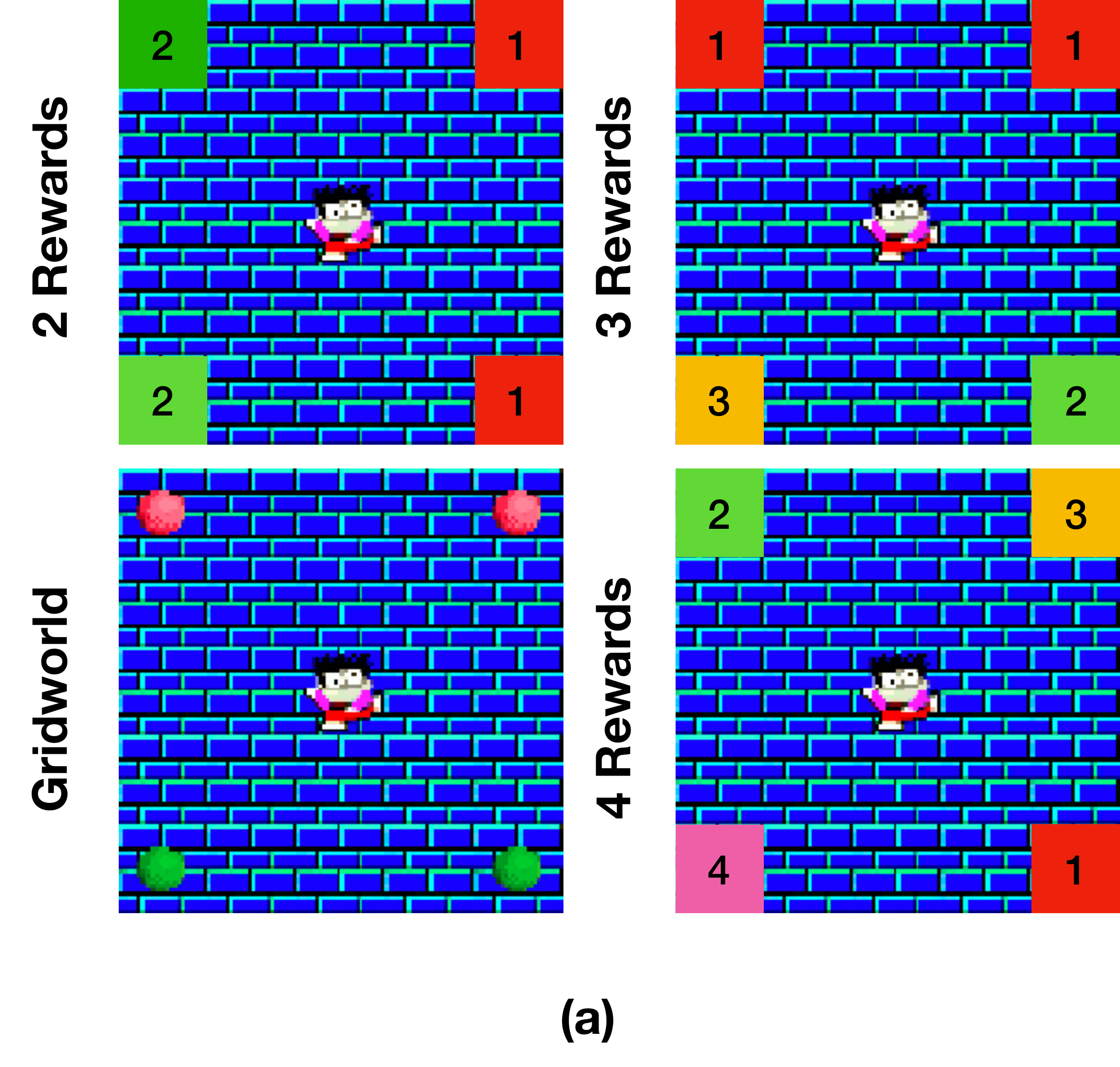}
\end{subfigure}
~
\begin{subfigure}
\centering
\includegraphics[scale=0.18]{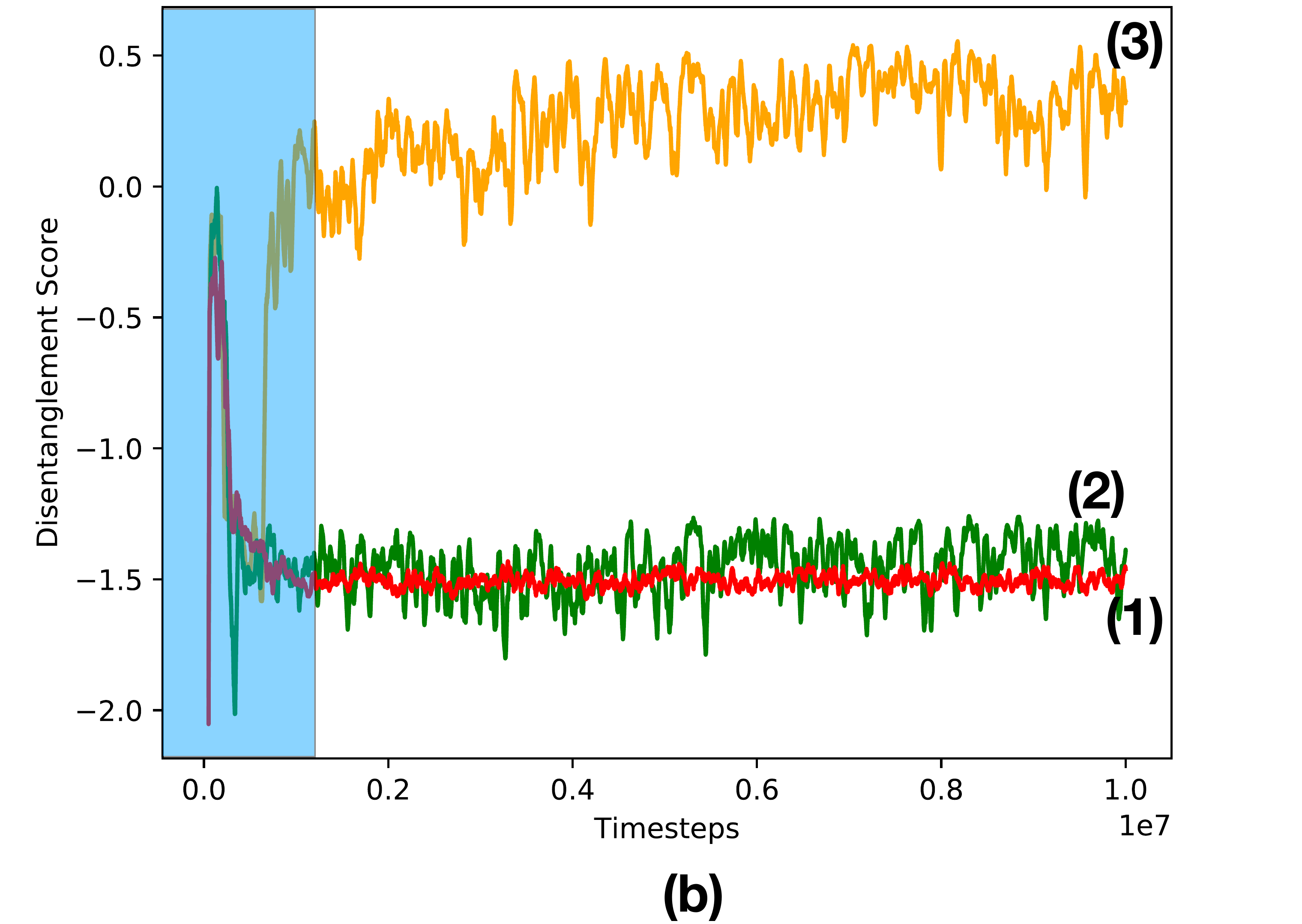}
\end{subfigure}
~
\begin{subfigure}
\centering
\includegraphics[scale=0.18]{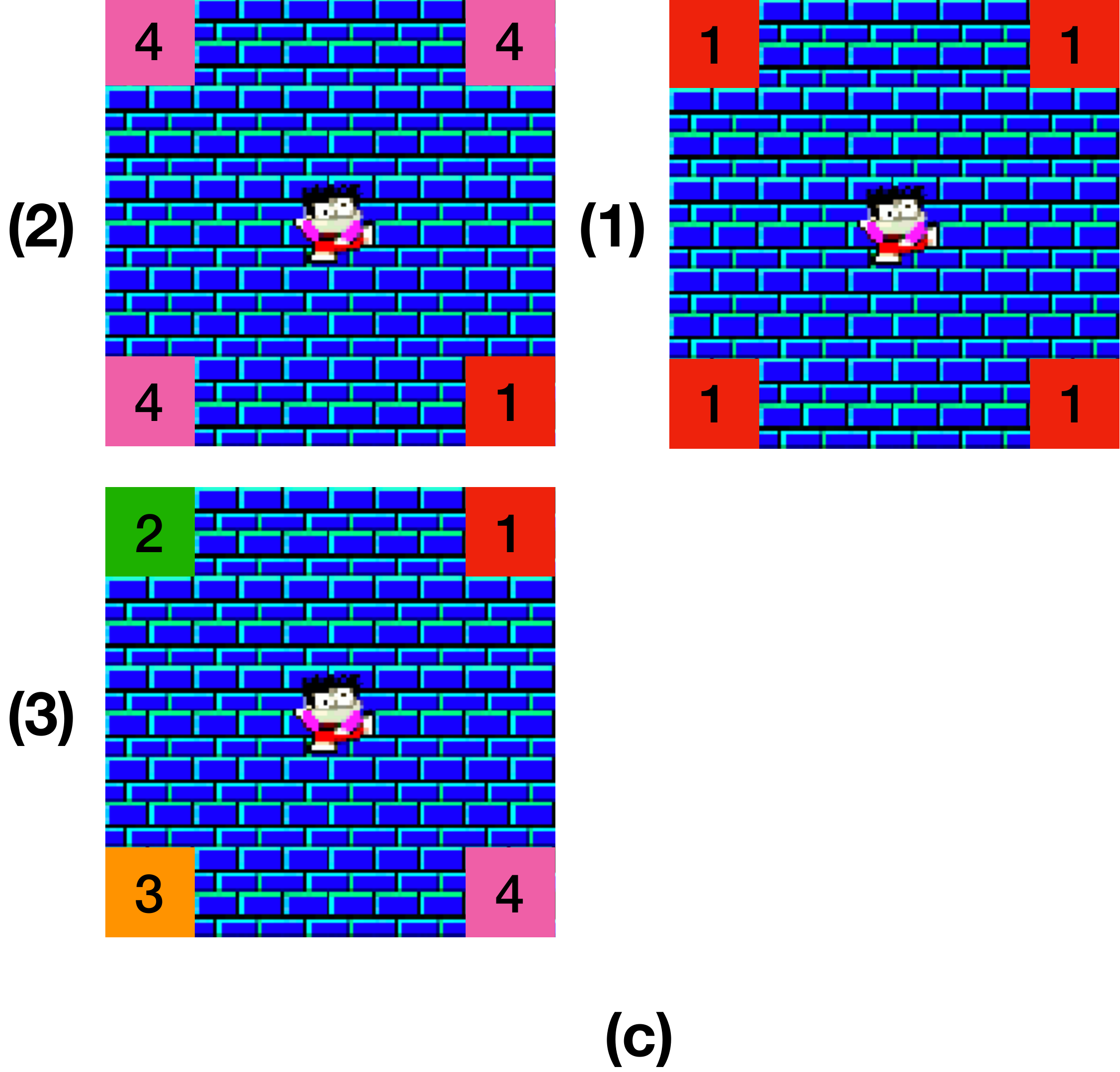}
\end{subfigure}
\caption{(a) Visualization of the gridworld environment (labeled ``Gridworld'') and reward decompositions learned as the number of learned reward functions ranges from 2 to 4. (b) Illustration of the type of instability that can occur in early training. Each curve is cherry-picked to correspond to a different asymptotic value of $J_\text{disentangled}$. (c) Corresponding reward disentanglements for each curve depicted in (b).} \label{fig:instability}
\end{figure*}

\subsection{Illustrative Results
\label{sec:illustrativeresults}}
Before we present our theoretical results and substantive empirical results, we present an empirical illustration of the kinds of reward decompositions achieved by our method in a 5x5 grid with a rewarding square at each corner (see image labeled ``Gridworld'' in Figure~\ref{fig:instability}a). The agent can move left, right, up and down. Upon reaching a rewarding square, the agent is teleported to a random square.

\paragraph{Learned Reward Decompositions}
Figure~\ref{fig:instability}a illustrates the types of reward decompositions obtained when we learn 2, 3 and 4 reward functions respectively (the numbers/colors denote which reward sources were put into the same learned reward). Notice how our disentanglement objective encourages the environment reward to be divided among the learned reward functions in a way that can be ``independently obtained'' by their associated policies. As the number of learned reward functions increases, this results in increasingly fine divisions of the environment reward: two reward functions divides the reward between halves of the environment, three reward functions separates the top half of the environment from the bottom and further divides the bottom half into bottom left and right corners and four reward functions associates each corner of the environment with a separate reward function.

\paragraph{Training Stability and Correspondence to $J_\text{disentangled}$}

Despite our disentanglement objective  discouraging the learning of trivial (i.e., zero-everywhere) reward functions, we observed that such degenerate decompositions can still emerge as a result of our training procedure. Particularly, this can occur in early training as illustrated in Figure~\ref{fig:instability}b where in the early (shaded in blue) part the $J_\text{disentangled}$ score for two runs (denoted (1) and (2)) ``drops'' to lower values, while the score for one run (denoted (3)) continues to increase with training.  The drops coincide with individual learned reward functions %spontaneously 
becoming trivial during early training. This can be seen in 
Figure~\ref{fig:instability}c where we show the corresponding reward decompositions. Note that run (1) puts all 4 reward sources into the same reward function while run (2) puts three reward source in one reward function and the fourth reward source in a second reward function. Only in run (3), where the $J_\text{disentangled}$ score does increase during learning do we find the disentanglement we were looking for, each reward source is in a separate reward function. It is comforting and useful that our $J_\text{disentangled}$ score is useful as an indicator of the quality of reward decompositions found.

How do we mitigate the above instability? %We speculate that these instabilities occur predominantly in early training because of a ``short-circuiting'' that can occur in the joint learning of reward functions and their corresponding policies. Particularly, as 
As seen in Algorithm~\ref{alg:low_level} presented in the Appendix~\ref{app:algorithm}, updating the learned reward functions depends on having policies that can obtain some environment reward, and a given policy's ability to obtain environment reward depends on its learned reward function being non-trivial. %This reciprocal interaction between the learned reward functions and their polices can thus be short-circuited 
If a learned reward function becomes trivial before the corresponding policy learns to obtain some environment reward, it can get stuck there.
We found that the prevalence of this problem could be significantly reduced with a specific choice of the $\alpha_{i,j}(s)$ coefficients defined in Equations~\ref{eq:J_indep} and \ref{eq:J_nontriv}. For all of our experiments we used 
\begin{equation*}
\alpha_{i,j}(s) = \begin{cases}
10 \cdot \frac{\exp{-2\subvalue{i}{\pi_i^*}(s)}}{\sum_{i=1}^n \exp{-2\subvalue{i}{\pi_i^*}(s)}} & i = j \\
1 & i \neq j
\end{cases} 
\end{equation*}
which corresponds to taking a softened minimum with temperature $2$ over the terms in Equation~\ref{eq:J_nontriv}. The intuition behind this choice is that by approximately maximizing the minimal $\subvalue{i}{\pi_i^*}(s)$ term, the optimization procedure is able to dynamically attend to the learned reward functions which are most in danger of becoming trivial. This choice yielded the best empirical stability among the choices we considered.

\section{Theoretical Properties} \label{sec:theoretical_properties}

In this section we detail various theoretical properties that result from the optimization of our $J_\text{disentangled}$ objective. The following theorems characterize the nature of the reward functions and their corresponding policies learned by our method.

\paragraph{Non-Overlapping Visitation Frequencies}
Our first theoretical result illustrates that the optimization of our $J_\text{disentangled}$ objective results in policies whose state-visitation frequencies have a low degree of overlap. This provides a concrete sense of the resulting disentanglement. 

Define the discounted visitation frequency of $s^\prime$ starting from $s$ under policy $\pi$ as:
\begin{equation*}
\mu_s^\pi(s^\prime) = \frac{\Psi_\pi(s, s^\prime)}{\int_{\mathcal{S}}\Psi_\pi(s,s^\prime) ds^\prime}
\end{equation*}
where $\Psi_{\pi}(s, s^\prime)$ is defined as
\begin{equation*}
\Psi_{\pi}(s, s^\prime) = \mathbb{E}\left[ \sum_{t=0}^\infty \gamma^t \boldsymbol{1}\{s_t = s^\prime\} \ \Bigg|\ s_0 = s, \pi \right] .
\end{equation*}

Specifically, we show that if two learned reward functions, $i$ and $j$, are sufficiently disentangled in that $\subvalue{i}{\pi_i^*}(s) - \subvalue{i}{\pi_j^*}(s) > C$, then the state visitation frequencies of their corresponding optimal policies must also be at least somewhat different. 
\begin{theorem} \label{thm:visit_freq}
If $\subvalue{i}{\pi_i^*}(s) - \subvalue{i}{\pi_j^*}(s) > C$ and $\mathcal{R}(s^\prime) \geq 0$ for all $s^\prime \in \mathcal{S}$ then 
\begin{equation*}
\delta(\mu_s^{\pi^*_j}, \mu_s^{\pi^*_i})  \geq \frac{(1 - \gamma) C}{2 R_\text{max}}
\end{equation*}
where $R_\text{max}$ is the maximum environment reward and $\delta(\cdot, \cdot)$ represents total variation distance.
\end{theorem}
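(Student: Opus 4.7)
The plan is to relate the value-function gap $U_i^{\pi_i^*}(s) - U_i^{\pi_j^*}(s)$ to an integral against the signed measure $\mu_s^{\pi_i^*} - \mu_s^{\pi_j^*}$, and then apply the standard duality between total variation distance and integration of bounded functions. First I would rewrite each value function in terms of $\Psi$:
\begin{equation*}
U_i^{\pi}(s) = \int_{\mathcal{S}} \Psi_\pi(s,s')\, R_i(s')\, ds'.
\end{equation*}
Observing that $\int_{\mathcal{S}} \Psi_\pi(s,s')\, ds' = \sum_{t=0}^\infty \gamma^t = 1/(1-\gamma)$ for every $\pi$, the normalization in the definition of $\mu_s^\pi$ is exactly the constant $1/(1-\gamma)$, so $\Psi_\pi(s,s') = \mu_s^\pi(s')/(1-\gamma)$.

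Substituting, the hypothesis $U_i^{\pi_i^*}(s) - U_i^{\pi_j^*}(s) > C$ becomes
\begin{equation*}
\frac{1}{1-\gamma} \int_{\mathcal{S}} R_i(s')\, \bigl(\mu_s^{\pi_i^*}(s') - \mu_s^{\pi_j^*}(s')\bigr)\, ds' \;>\; C.
\end{equation*}
Next I would bound the learned reward $R_i$ in absolute value. By the softmax parameterization in Equation~\ref{eq:softmax_parameterization}, $R_i^\theta(s') = \mathcal{R}(s') \cdot p_i(s')$ with $p_i(s') \in (0,1)$, and together with the assumption $\mathcal{R}(s') \geq 0$ this gives $0 \leq R_i(s') \leq \mathcal{R}(s') \leq R_\text{max}$, hence $\|R_i\|_\infty \leq R_\text{max}$.

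Then I would invoke the standard inequality $\bigl|\int f\, d(\mu - \nu)\bigr| \leq 2 \|f\|_\infty\, \delta(\mu,\nu)$ for probability measures $\mu,\nu$ (which follows directly from the definition $\delta(\mu,\nu) = \tfrac{1}{2}\int |d\mu - d\nu|$). Applied to $f = R_i$ and the two visitation distributions, this yields
\begin{equation*}
C \;<\; \frac{1}{1-\gamma} \cdot 2 R_\text{max}\, \delta\bigl(\mu_s^{\pi_i^*},\, \mu_s^{\pi_j^*}\bigr),
\end{equation*}
and rearranging gives the claimed lower bound $\delta(\mu_s^{\pi_j^*}, \mu_s^{\pi_i^*}) \geq (1-\gamma)C/(2R_\text{max})$, using symmetry of total variation distance.

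I do not expect a serious obstacle here; the argument is essentially a one-line application of TV-duality once the value difference is rewritten in integral form. The only two places requiring care are (i) pinning down the normalization constant $1/(1-\gamma)$ for $\Psi$ so that the signed measure $\mu_s^{\pi_i^*} - \mu_s^{\pi_j^*}$ genuinely arises from two probability measures, and (ii) using the non-negativity assumption on $\mathcal{R}$ together with the softmax form of the $R_i$ to ensure $\|R_i\|_\infty \leq R_\text{max}$ (without it the learned rewards could in principle be as large as $|\mathcal{R}|$, but the sign argument is cleanest when both $\mathcal{R}$ and the softmax weights are non-negative).
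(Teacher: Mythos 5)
Your proposal is correct and follows essentially the same route as the paper's own proof: rewrite $\subvalue{i}{\pi}(s)$ as an integral of $R_i$ against $\Psi_\pi$, bound $0 \leq R_i \leq R_\text{max}$ via the softmax form and non-negativity of $\mathcal{R}$, and convert the $\ell_1$ difference of the (unnormalized) occupancies, whose total mass is $1/(1-\gamma)$, into total variation distance of the normalized visitation distributions. Your treatment is just slightly more explicit about the normalization constant and the TV-duality step than the paper's.
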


\begin{proof}
We can alternatively represent value functions in terms of $\Psi_\pi$ as:
\begin{equation*}
\begin{aligned}
\subvalue{i}{\pi_j^*}(s) = \sum_{s^\prime \in \mathcal{S}} \Psi_{\pi^*_j}(s, s^\prime) R_i(s^\prime) ds^\prime.
\end{aligned}
\end{equation*} 
Our constraints imply that 
\begin{equation*}
\begin{aligned}
C &\leq \subvalue{i}{\pi_i^*}(s) - \subvalue{i}{\pi_j^*}(s) \\
&= \sum_{s^\prime \in \mathcal{S}}R_i(s^\prime)\left( \Psi_{\pi^*_i}(s, s^\prime) - \Psi_{\pi^*_j}(s, s^\prime) \right) \\
%&\leq \sum_{s^\prime \in \mathcal{S}} R_i(s^\prime) \left| \Psi_{\pi^*_i}(s, s^\prime) - \Psi_{\pi^*_j}(s, s^\prime) \right| \\
&\leq R_\text{max} \sum_{s^\prime \in \mathcal{S}} \left| \Psi_{\pi^*_i}(s, s^\prime) - \Psi_{\pi^*_j}(s, s^\prime)\right| \\
&= 2 R_\text{max} (1 - \gamma)^{-1} \delta(\mu_s^{\pi^*_j}, \mu_s^{\pi^*_i})
\end{aligned}
\end{equation*}
as needed.
\end{proof}

\paragraph{Saturation of Softmax-Parameterization}
We refer to a reward decomposition as saturated if for each rewarding state there is exactly one reward function that is non-zero (i.e., the decomposition never splits an environment reward between two or more reward functions).

\begin{theorem} \label{thm:saturation}
For $\alpha_{i,j}(s) = 1$ for all $i, j \in [n]$ and $s \in \mathcal{S}$ and under conditions described in Appendix~\ref{app:mathematical_results}, the optimal reward decomposition under $J_\text{disentangled}$ is saturated. 
\end{theorem}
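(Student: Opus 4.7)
The plan is to exploit the fact that, once the collection of optimal sub-policies $\pi_1^*, \ldots, \pi_n^*$ is held fixed, the objective $J_\text{disentangled}$ is \emph{linear} in the per-state softmax weights $p_i(s') := \exp(F_i^\theta(s'))/\sum_j \exp(F_j^\theta(s'))$. First I would substitute the representation $\subvalue{i}{\pi}(s) = \sum_{s'} \Psi_\pi(s, s') R_i(s')$ (already used in the proof of Theorem~\ref{thm:visit_freq}) into both terms of $J_\text{disentangled}$, write $R_i^\theta(s') = p_i(s')\,\mathcal{R}(s')$, and swap sums to obtain
\[
J_\text{disentangled}(\theta) = \sum_{s'} \mathcal{R}(s') \sum_{i=1}^{n} p_i(s')\, c_i(s'; \pi^*),
\]
where $c_i(s'; \pi^*) := \mathbb{E}_{s \sim \mu}\bigl[\Psi_{\pi_i^*}(s, s') - \sum_{j \neq i}\Psi_{\pi_j^*}(s, s')\bigr]$ depends only on the policies, not on the weights.

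For each state $s'$ with $\mathcal{R}(s') \neq 0$ the contribution to $J_\text{disentangled}$ is therefore a linear function of the simplex variable $p(s') \in \Delta^{n-1}$. Since a linear function on a simplex attains its maximum at a vertex, the improving direction at any non-saturated $p(s')$ is to concentrate all mass on the single index $i^*(s') := \arg\max_i c_i(s'; \pi^*)$, which is precisely the saturated assignment. Running this argument simultaneously across all rewarding states produces a candidate saturated decomposition whose objective dominates the original non-saturated one whenever the policies are held fixed.

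The hard part, and the point at which the appendix hypotheses must intervene, is that the policies $\pi_i^*$ themselves depend on $\theta$ through $R_i^\theta$, so $c_i(s'; \pi^*)$ is not literally constant under a perturbation of $p$. I would discharge this by assuming (as I expect the appendix to require) strict uniqueness of each $\pi_i^*$ with a positive advantage gap, together with a strict inequality $c_{i^*(s')}(s'; \pi^*) > c_k(s'; \pi^*)$ for every $k \neq i^*(s')$. Under this genericity, sufficiently small perturbations of $p(s')$ leave every $\pi_i^*$, and thus every $c_i$, unchanged, so any non-saturated $p(s')$ can be nudged toward the vertex $i^*(s')$ with a strict first-order increase in $J_\text{disentangled}$, contradicting optimality.

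A secondary technicality to address is that the softmax parameterization only reaches the vertices of the simplex in a limiting sense, so the conclusion is most naturally stated either as a supremum taken over the closure of the parameter space or by re-parameterizing with simplex weights directly; I expect the appendix conditions to make this precise, along with pinning down the "no-tie" hypothesis on the coefficients $c_i(s'; \pi^*)$ that rules out flat faces of the simplex being equally optimal.
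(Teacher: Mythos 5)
Your proposal is essentially correct and its core idea — for fixed policies the objective is linear in the per-state simplex weights, so the optimum sits at a vertex — is the same engine driving the paper's proof, but you package the argument differently. The paper linearizes only $J_\text{nontriv}$ (writing it as $\sum_s \mathcal{R}(s)\sum_i \mu^\gamma_{i,\theta}(s) \cdot$ weight, with $\mu^\gamma_{i,\theta}$ the discounted visitation under $\pi_i^\theta$ started from $\mu$), compares $\theta$ against one explicit saturated re-parameterization $\theta'$ that assigns each rewarding state to the factor of maximal $\mu^\gamma_{i,\theta}(s)$, and then disposes of $J_\text{indep}$ via Lemma~\ref{lemma:relationship}: since $J_\text{indep}+J_\text{nontriv} = \mathbb{V}(\pi^*_{1:n})$ depends only on the policies, the change in $J_\text{indep}$ is controlled by the sensitivity $S(\pi_{1:n}^\theta,\pi_{1:n}^{\theta'})$ of the total value, and the stated ``conditions in the appendix'' amount to $S$ being sufficiently small. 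You instead linearize the full $J_\text{disentangled}$ with coefficients $c_i(s') = \mathbb{E}_{s\sim\mu}\bigl[\Psi_{\pi_i^*}(s,s') - \sum_{j\neq i}\Psi_{\pi_j^*}(s,s')\bigr]$; note that the subtracted term is independent of $i$, so your $\arg\max_i c_i(s')$ coincides with the paper's choice of factor — your linearization is in effect a per-state version of Lemma~\ref{lemma:relationship}, which is a nice way to see why $J_\text{indep}$ never changes the winning vertex for fixed policies. Where you genuinely diverge is in handling the coupling between weights and policies: the paper makes a global comparison and assumes small value-sensitivity $S$, while you make a local perturbation/contradiction argument under a guessed genericity condition (unique optimal policies with a positive advantage gap, no ties among the $c_i$). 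Both are legitimate sufficient conditions, but yours are not the ones the appendix actually supplies, and your argument yields only a local-optimality contradiction rather than the paper's explicit dominating saturated candidate. Your closing caveat about the softmax never reaching simplex vertices is well taken — the paper quietly sidesteps it by setting $F_i^{\theta'}(s)\in\{0,1\}$ and treating these as the weights themselves, so your ``closure of the parameter space'' remark is, if anything, more careful than the original.
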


% Our second theoretical result illustrates an interesting property of the reward functions we learn. Namely, under our softmax parameterization introduced in Equation~\ref{eq:softmax_parameterization}, the optimal value of $J_\text{nontriv}$ has the property that for each state $s$ there is an index $i(s)$ for which $R_{i(s)}^\theta(s) = \mathcal{R}(s)$. That is, the softmax parameterization saturates. 

% \begin{theorem} \label{thm:saturation}
% Assuming $\mathcal{R}(s) \geq 0$ for all $s \in \mathcal{S}$: For any $\theta$, there exists some $\theta^\prime$ such that $J_\text{nontriv}(\theta^\prime) \geq J_\text{nontriv}(\theta)$ and under $\theta^\prime$ for each $s \in \mathcal{S}$ there exists $i(s)$ such that $R_{i(s)}^{\theta^\prime}(s) = \mathcal{R}(s)$. If there exists some $i \in [n]$ and $s \in \mathcal{S}$ such that $R_i^\theta(s) \neq R_i^{\theta^\prime}(s)$ then the inequality is strict. Moreover, if $S(\pi_{1:n}^\theta, \pi_{1:n}^{\theta^\prime})$ is sufficiently small then $J_\text{disentangled}(\theta^\prime) \geq J_\text{disentangled}(\theta)$. 
% \end{theorem}

\begin{proof}
\begin{equation*}
\begin{aligned}
J_\text{nontriv}(\theta^\prime) &= \mathbb{E}_{s \sim \mu}\left[\sum_{i=1}^n \subvalue{R_i^\theta}{\pi_i^\theta}(s)\right] \\
&= \mathbb{E}_{s \sim \mu}\left[\sum_{i=1}^n \mathbb{E}\left[\sum_{t=0}^\infty \gamma^t R_i^\theta(s_t) \left| s_0 = s \right.\right]\right] \\
%&= \sum_{i=1}^n \sum_{t=0}^\infty \gamma^t \sum_{s \in \mathcal{S}} \mu_{i,\theta}^t(s) R^\theta_i(s) \\
%&= \sum_{i=1}^n \sum_{s \in \mathcal{S}} \mu_{i, \theta}^\gamma(s) R_i^\theta(s) \\
&= \sum_{s \in \mathcal{S}} R(s) \sum_{i=1}^n \mu_{i,\theta}^\gamma(s) F_i^\theta(s)
\end{aligned}
\end{equation*}
where $\mu_{i,\theta}^\gamma(s) = \sum_{t=0}^\infty \gamma^t \mu_{i,\theta}^t(s)$ and $\mu^0_{i,\theta}(s) = \mu(s)$ for all $i \in [n]$ and $s \in \mathcal{S}$.

Suppose we chose an alternate parameterization $\theta^\prime$ with the property that
\begin{equation*}
F_i^{\theta^\prime}(s) = \begin{cases}
1 & \text{ if } i = \min \underset{j}{\text{argmax}}\ \mu_{j,\theta}^\gamma(s) \\
0 & \text{ else.}
\end{cases}
\end{equation*}
This alternate parameterization yields 
\begin{equation*}
\mathbb{E}_{s \sim \mu}\left[\sum_{i=1}^n \subvalue{R_i^{\theta^\prime}}{\pi_i^\theta}(s)\right] \geq \mathbb{E}_{s \sim \mu}\left[\sum_{i=1}^n \subvalue{R_i^\theta}{\pi_i^\theta}(s)\right]
\end{equation*}
with strict inequality if $F_i^\theta(s) \neq F_i^{\theta^\prime}(s)$ for any pair of $i, s$. Next notice that $\pi_i^\theta$ represents the best policy to collect the $i$'th reward under parameterization $\theta$. Under the new parameterization $\theta^\prime$, there is a new optimal policy $\pi_i^{\theta^\prime}$. It thus follows that:
\begin{equation*}
\begin{aligned}
   J_\text{nontriv}(\theta^\prime) &= \mathbb{E}_{s \sim \mu}\left[\sum_{i=1}^n \subvalue{R_i^{\theta^\prime}}{\pi_i^{\theta^\prime}}(s)\right]  \\
   &\geq
   \mathbb{E}_{s \sim \mu}\left[\sum_{i=1}^n \subvalue{R_i^{\theta^\prime}}{\pi_i^\theta}(s)\right] \\
   &\geq \mathbb{E}_{s \sim \mu}\left[\sum_{i=1}^n \subvalue{R_i^\theta}{\pi_i^\theta}(s)\right] = J_\text{nontriv}(\theta).
\end{aligned}
\end{equation*}

Finally note that:
\begin{equation*}
\begin{aligned}
&J_\text{indep}(\theta^\prime) - J_\text{indep}(\theta)  \\ &\leq S(\pi_{1:n}^\theta, \pi_{1:n}^{\theta^\prime}) - (J_\text{nontriv}(\theta^\prime) - J_\text{nontriv}(\theta))
\end{aligned}
\end{equation*}
where $S(\pi_{1:n}^\theta, \pi_{1:n}^{\theta^\prime})$ is defined in Appendix~\ref{app:mathematical_results}. 

This implies that if $S(\pi_{1:n}^\theta, \pi_{1:n}^{\theta^\prime})$ is sufficiently small then $J_\text{disentangled}(\theta^\prime) \geq J_\text{disentangled}(\theta)$.
\end{proof}

\section{Experimental Results
\label{sec:experiments}}

We present three classes of experimental results here.  
%We begin by illustrating the resulting reward decompositions on a simple gridworld domain and discussing the stability of our optimization procedure. The simplicity of this domain serves to clearly illustrate various properties of our method.  
\emph{First}, we demonstrate the qualitative and quantitative properties of our reward decompositions and associated policies on the Atari 2600 games: Assault, Pacman and Seaquest. We connect these qualitative properties with the theoretical properties discussed in Section~\ref{sec:theoretical_properties}. \emph{Second}, we compare the policies learned by our method to those learned by an alternative method: Independently Controllable Factors (\textbf{ICF}; \citealt{thomas2017independently}).
%showing that our method learns policies that are directed toward disparate portions of the environment whereas the latter seems to learn policies that correspond to individual environment actions independent of the environment. 
\emph{Third}, we illustrate that the policies optimal for our learned reward functions can be used as actions to obtain environment reward and further show that these policies exhibit some degree of generalization performance when applied to tasks with modified environment reward functions. 

\paragraph{Common Experimental Procedure}

%The experiments run in this section can be broadly categorized as involving (1) learning a type of decomposition, (2) displaying a qualitative aspect of a decomposition and (3) displaying a quantitative aspect of a decomposition. 

For all experiments involving learning a decomposition, we produce $4$ decompositions and select the best run, i.e., the decomposition from the run that achieves the best disentanglement score.
%we run our algorithm $4$ times and then select the best-decomposition, i.e., the decomposition from the run that achieves the largest disentanglement score. 
%of the learning procedure are run and the instance which achieved the best value of the procedure's objective function is selected as a representative of the decomposition. For all experiments involving displaying quantitative or qualitative aspects of a decomposition these representative instances are used. 
Unless otherwise stated, all curves seen in the subsequent sections should be regarded as the average of four runs with different random seeds all using the best-decomposition discussed above. For further details regarding experimental procedure and hyperparameter selection, see Appendix~\ref{app:experiment_details}.

\begin{figure*}
\centering
% \begin{subfigure}
% \centering
% \includegraphics[scale=0.16]{reward_visualization_2rewards.pdf}
% \end{subfigure}
% ~
% \begin{subfigure}
% \centering
% \includegraphics[scale=0.16]{reward_visualization_3rewards.pdf}
% \end{subfigure}
% ~
% \begin{subfigure}
% \centering
% \includegraphics[scale=0.16]{reward_visualization_5rewards.pdf}
% \end{subfigure}
% ~\\
% \begin{subfigure}
% \centering
% \includegraphics[scale=0.16]{reward_visualization_8rewards.pdf}
% \end{subfigure}
\includegraphics[scale=0.15]{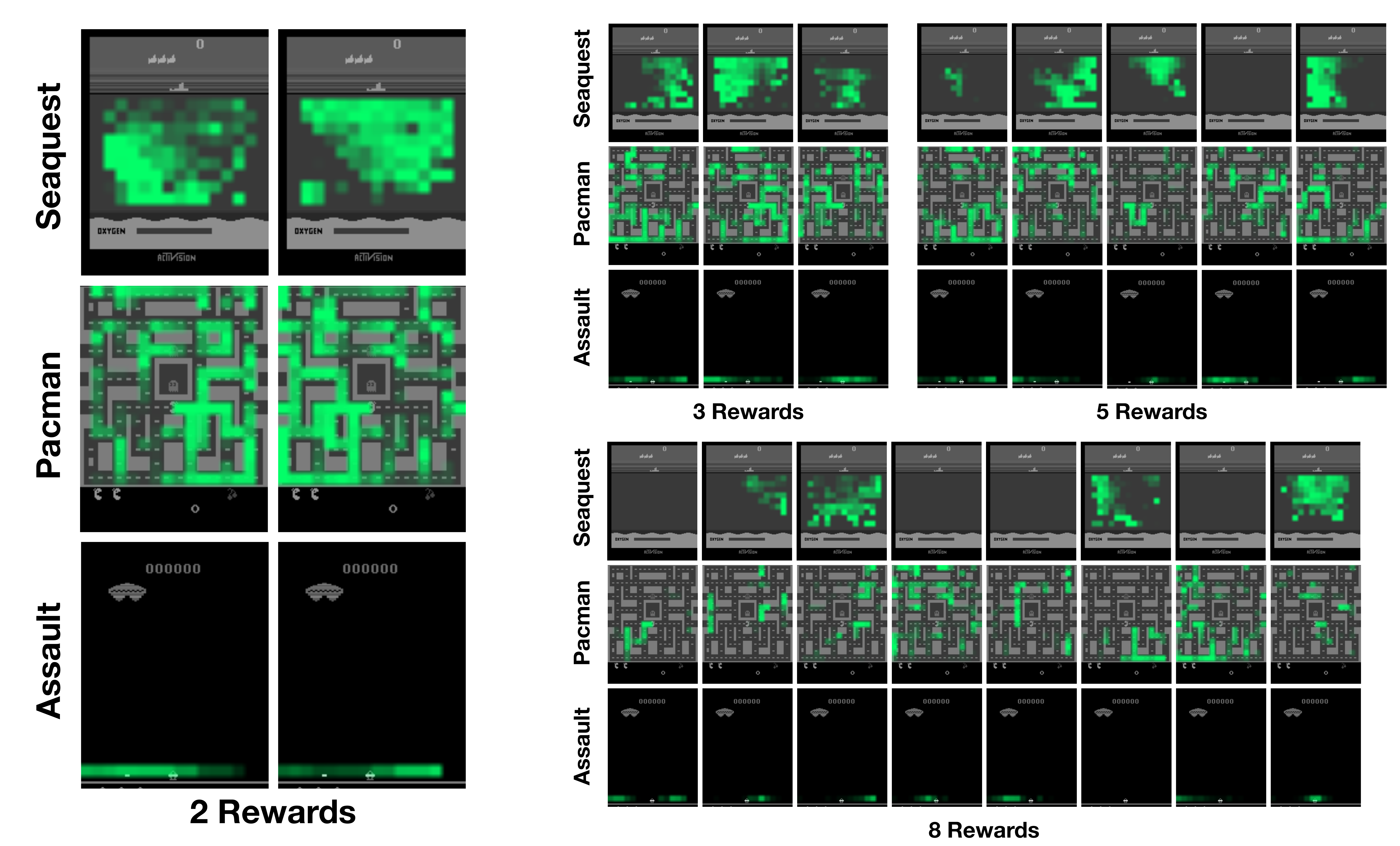}
\caption{
An illustration of the relationship between selected ``game-elements'' and different learned reward functions. The game-elements for Seaquest and Pacman are the agents that the player controls. The game-element for Assault is the horizontal location of the agent's laser. Images in this figure are arranged in three ways: instances of learning different numbers of reward functions (i.e., 2,3,5,8) are grouped together in blocks, individual Atari 2600 games are grouped together by row and within each row and block are representations of individual learned reward functions. Neon green regions in each image depict corresponding spatial regions in which the agent received a reward under a particular decomposed reward.
} \label{fig:reward_visualization}
\end{figure*}

\subsection{Qualitative \& Quantitative Analyses}
%We next show that our reward decomposition method can scale to the more complicated domains. Particularly we examine the performance of our method on several Atari 2600 games: Pacman, Assault and Seaquest. This selection of games was chosen to highlight the decompositions learned by our method a games with diverse objectives and styles of play. 

% TODO move to appendix. 
%As a form of preprocessing, the corresponding RL environments for each of these games have states consisting of the last three observed frames of the game concatenated together, rewards clipped between 0 and 1, and no terminal states. 

\paragraph{Reward Function Visualization Methodology}
Atari 2600 games have too many states to enumerate the learned rewards, necessitating an approximate visualization. For each game we select a ``game-element'' (see Figure~\ref{fig:reward_visualization}) and visualize its position as reward is received from the different learned reward functions. To construct our visualization, we discretize the environments spatially into $10 \times 10$ pixel bins. We then execute a random policy for $100,000$ time-steps; every time an environment reward is received we store its associated reward decomposition in the appropriate bin determined by the location of the game-element. Finally, we compute the distribution over learned reward functions at each bin and produce images for each reward function with regions of high-reward shaded in green. This is depicted in Figure~\ref{fig:reward_visualization}. 

\paragraph{Reward Function Visualization Analysis}
In Figure~\ref{fig:reward_visualization} each image shows the location of the game-element when a specific learned reward is obtained. We observe that each learned reward function generally yields reward in separate regions of the environment's state-space. This effect is most pronounced when the number of reward functions is small but persists as more are added, suggesting that the location of the game-element plays a significant role in our learned decompositions. The learning of location-dependent reward functions is consistent with Theorem \ref{thm:visit_freq} which implies that the optimization of $J_\text{disentangled}$ results in optimal policies with different occupancy frequencies. 

%which implies increasingly (SB: WHAT IN THE RESULTS SHOW THIS INCREASINGLY PART?) different occupancy frequencies for optimal policies corresponding to our learned rewards as $J_\text{disentangled}$ is maximized. 

\paragraph{Empirical Reward Saturation}
Theorem~\ref{thm:saturation} predicts that under certain conditions, the optimal reward decomposition never splits the environment reward of a state between its learned factors. We term this property ``saturation.'' To illustrate the degree to which our method learns saturated reward functions empirically, we define the following ``saturation score:''
\begin{equation} \label{eq:saturation_percentage}
P_\text{sat}(R_1^\theta(s), \ldots, R_n^\theta(s)) = \frac{\max_i{R_i^\theta(s) / \mathcal{R}(s)} - 1/n}{1 - 1/n}    
\end{equation} 
which is defined when the environment reward is nonzero and ranges from $0$ to $1$ as the reward decomposition becomes more saturated at $s$.

%SB: HOW DO YOU HANDLE zero environment rewards since they surely happen in Atari?

Table~\ref{tbl:saturation_table} depicts the average saturation scores over all instances of positive reward obtained during the execution of a random policy across our selection of Atari 2600 games as the number of learned reward functions varies. We observe that, under Equation~\ref{eq:saturation_percentage}, our learned decompositions are extremely saturated, suggesting that Theorem~\ref{thm:saturation} holds in practice.

\begin{table}
\begin{center}
\begin{tabular}{|c||c|c|c|}
\hline
Reward Factors & Pacman & Seaquest & Assault \\
\hline
2 & .989 & .986 & .995 \\
\hline 
3 & .979 & .974 & .993 \\
\hline 
5 & .955 & .940 & .991 \\
\hline 
8 & .947 & .989 & .990 \\
\hline
\end{tabular} 
\end{center}
\caption{A table depicting the ``saturation scores'' defined in Equation~\ref{eq:saturation_percentage} for our selected Atari 2600 games as the number of reward factors varies.}\label{tbl:saturation_table}
\end{table}

\subsection{Comparison to ICF}

We now compare our method against a competing approach, ICF, which 
%jointly learns state decompositions and policies by optimizing an alternative notion of disentanglement: ``independent controllability.'' Specifically, \citet{thomas2017independently}'s method 
learns a set of latent factors as a state decomposition with the property that each factor can be controlled by a corresponding policy without changing the other factors by optimizing a ``selectivity score.''  
While ICF does not involve the reward function, we can empirically compare its resultant policies\footnote{We use the directed variant of the selectivity score proposed in \cite{thomas2017independently}. In the comparisons reported here, unless otherwise stated, we allot ICF two directed policies for every policy we let our method learn.} against those generated by our method.

%While our 

\paragraph{Qualitative Gridworld Comparison}
%To illustrate the qualitative differences between ICF~\cite{thomas2017independently} and our method, we begin by 
We examine the behaviors of the policies found by ICF with 4 policies and our method with 4 reward functions on the gridworld domain in Figure~\ref{fig:sokoban_comp}. In this setting 
%\citet{thomas2017independently}'s method 
ICF learns policies corresponding to the cardinal directions of the domain, with each policy performing a single action (i.e., up, down, left, right) irrespective of the state. Conversely, given four reward functions, our approach learns policies that are directed to the four sources of reward. 

%As a first comparison we illustrate the policies learned by both techniques on our easily interpretable Gridworld domain. In Figure~\ref{fig:sokoban_comp} we present the policies learned by each technique. Notice that \citet{thomas2017independently}'s method learns policies corresponding to moving the agent in each of the cardinal directions in the environment, whereas our method learns policies that navigate the agent to each of the four corners of the environment. This setting illustrates the differences between the types of disentanglement that each technique performs. Namely, \citet{thomas2017independently} learn policies that control a low-dimensional set of factors (which in this case capture the agent's position), whereas our method learns a set of behaviors which pursue some independently obtainable sources of reward in the environment. 
\begin{figure}[]
\centering 
\begin{subfigure}
\centering 
\includegraphics[scale=0.25]{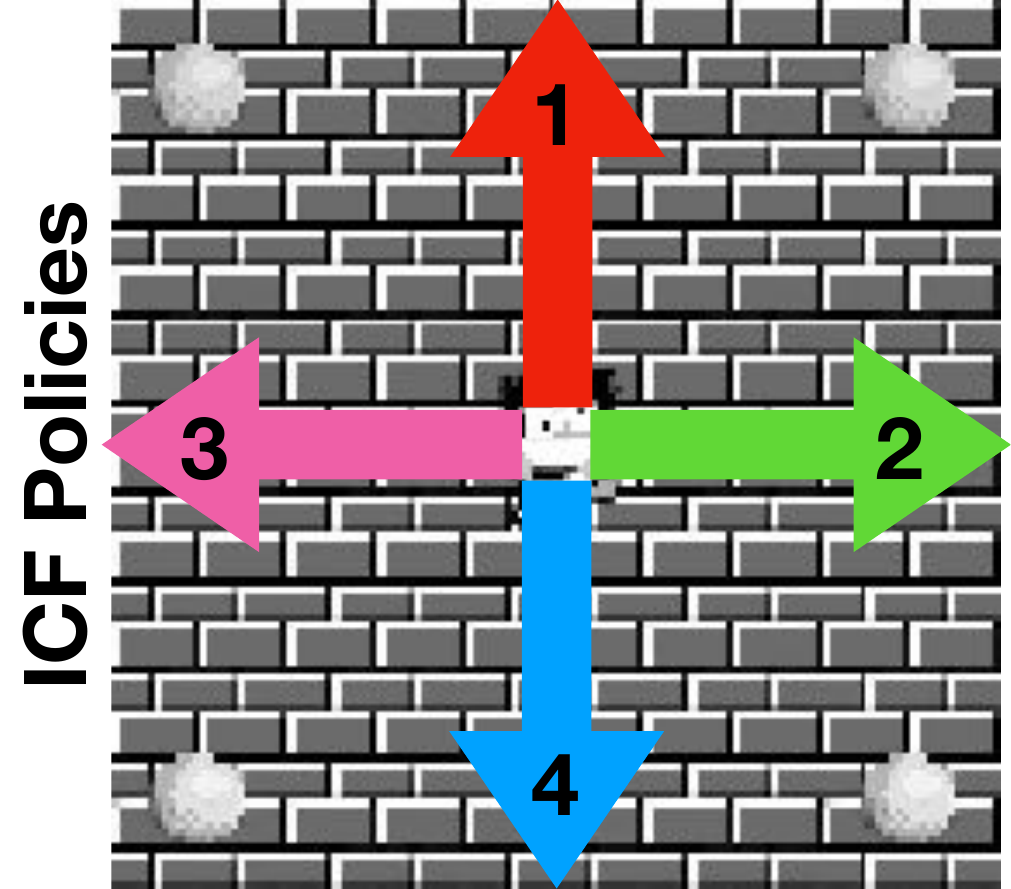}
\end{subfigure}
~
\begin{subfigure}
\centering 
\includegraphics[scale=0.25]{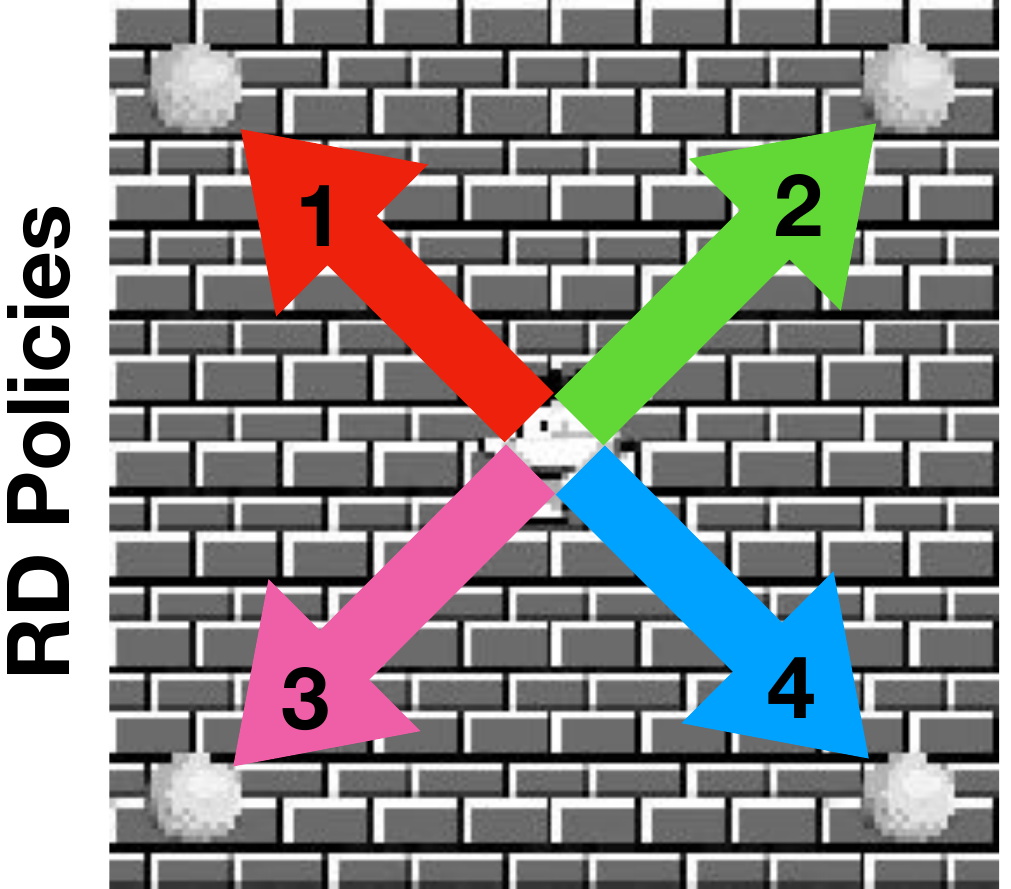}
\end{subfigure}
\caption{An illustration of the different types of policies generated by ICF (left) and our own method (right) on the gridworld domain. Notice that ICF generated policies that correspond to moving in the four cardinal directions of the game, whereas the policies generated by our method are directed toward the different sources of reward in the environment.  }\label{fig:sokoban_comp}
\end{figure}

%A crucial difference between these techniques is the time-horizon over which they each operate. The selectivity objective in \cite{thomas2017independently} only considers the effect of taking a single action on each factor. One can consider relaxed variations of \citet{thomas2017independently}'s method where selectivity is enforced over $T$ time-steps. Allowing for individual actions in the $T$ step trajectory to be less ``selective'' so long that the overall selectivity is restored by the end of the trajectory.  In this extended setting, \citet{thomas2017independently}'s work is the strictest possible manner in which selectivity can be enforced. On the other hand, our method naturally considers full discounted trajectories for each policy by specifying our constraint using value functions. We speculate that this difference is what allows our method to exhibit more temporally varied behavior on our toy domain. 

\begin{table*}[ht]
    \centering
    \begin{tabular}{|c||c|c||c|c||c|c||c|c||c|c||c|c|}
    \hline 
        & \multicolumn{6}{c||}{Average Policy Values} &
        \multicolumn{6}{c|}{Average State-Dependence} \\ 
        \hline 
        \multirow{2}{*}{Factors} & 
        \multicolumn{2}{c||}{Pacman} &
        \multicolumn{2}{c||}{Seaquest} &
        \multicolumn{2}{c||}{Assault} & 
        \multicolumn{2}{c||}{Pacman} &
        \multicolumn{2}{c||}{Seaquest} &
        \multicolumn{2}{c|}{Assault}
        \\
         & RD & ICF & RD & ICF & RD & ICF
         & RD & ICF & RD & ICF & RD & ICF \\
         \hline \hline
         2 & 194.06 & 109.77 & 57.95 & 4.17 & 178.07 & 27.10 &
         0.161 & 0.005 & 0.185 & 0.000 & 0.278 & 0.000 \\
         \hline 
         3 & 170.75 & 92.32 & 39.09 & 9.14 & 152.20 & 30.34 &
         0.200 & 0.004 & 0.142 & 0.002 & 0.266 & 0.003 \\
         \hline 
         5 & 98.18 & 75.64 & 31.25 & 2.36 & 136.87 & 26.84 &
         0.156 & 0.004 & 0.125 & 0.001 & 0.271 & 0.012 \\
         \hline 
         8 & 65.43 & 72.15 & 21.62  & 9.22 & 78.91 & 36.42 &
         0.137 & 0.002 & 0.102  & 0.001 & 0.259 & 0.012 \\ 
         \hline 
    \end{tabular}
    \caption{Table of the average policy values and average state-dependencies (as defined in Equation~\ref{eq:state_dependence}) for both our reward decomposition method (RD) and ICF. }
    \label{tbl:policy_values}
\end{table*}

\paragraph{Quantitative Policy Comparison on Atari}
As a further comparison between ICF and our method on Atari 2600 games, we collect two statistics on the policies learned by each: the value of the environment reward averaged over all the policies, and the average degree to which each policy changes actions with respect to state. We denote this latter quantity as ``state-dependence'' and define it for a policy $\pi$ as follows:
\begin{equation} \label{eq:state_dependence}
P_\text{sdep}(\pi) = \frac{1}{|\mathcal{A}|} \sum_{a \in \mathcal{A}} \sigma_{s \sim \tau_{\pi}}(\pi(a | s))
\end{equation}
where $\tau_\pi$ denotes states sampled from a trajectory generated according to policy $\pi$ and $\sigma_{p}(\cdot)$ denotes standard deviation with respect to distribution $p$. 

We display the results of these comparisons in Table~\ref{tbl:policy_values}. Notice how the average values of the our method's policies achieve higher average environment-reward-based value than the ICF policies and \textit{substantially higher} state-dependencies. While we expect this trend in average value (since our method considers the environment reward), the near-zero state-dependencies of the ICF policies is more surprising and suggests that the policies learned by ICF are not influenced by the environment state.

\subsection{Control using Induced MDPs} \label{sec:induced_mdp} 
%While our primary focus in this paper is on learning disentangled reward functions for the purpose of discovering interesting decompositions of the domain, 
We also explore whether the policies generated from disentangled rewards can be useful for learning control with a DQN \cite{mnih2015human}. Replacing the actions in the original MDP with the policies found by our method ``induces'' an MDP in which selecting a policy in a state executes the action in the original MDP the policy takes in that state. The top row of Figure~\ref{fig:meta_controller_results} shows that the game score in the induced MDP rises faster but asymptotes lower than the baseline of learning in the original MDP in Seaquest and Assault. We conjecture the faster rise is because the policies learned for the disentangled rewards produced by our method are also good at obtaining environment reward.  Of course, the lower asymptote is expected because the baseline method is not limited in the behaviors that can be executed. 
We should also expect the use of the learned policies as actions to generalize to changes in the environment reward and we see this in the bottom row of Figure~\ref{fig:meta_controller_results}. The specific reward changes to each Atari game are explained in the caption of Figure~\ref{fig:restricted_rewards}. In both Seaquest and Assault, the induced agent not only learns faster than the baseline but also achieves an asymptote that is competitive with it. 

% \begin{figure}[H]
%     \centering
%     \includegraphics[scale=0.22]{meta_controller_results_vertical.pdf}
%     \caption{Left column: Performance of our method's learned policies on various Atari 2600 games in the induced MDP setting. Right column: Performance of our method's learned policies on restricted versions of the same Atari 2600 games.}
%     \label{fig:meta_controller_results_vertical}
% \end{figure}

Another expected but nonetheless interesting result is that in both Seaquest and Assault performance gets better in general as we increase the number of learned rewards for both the top and bottom row of Figure~\ref{fig:meta_controller_results}b. In Pacman, the results are a bit less consistent: on the one hand in the top row learning with 3 policies as actions does a bit better than the baseline, on the other hand performance is not necessarily better with increasing numbers of policies.

\begin{figure}
\centering
\includegraphics[scale=0.12]{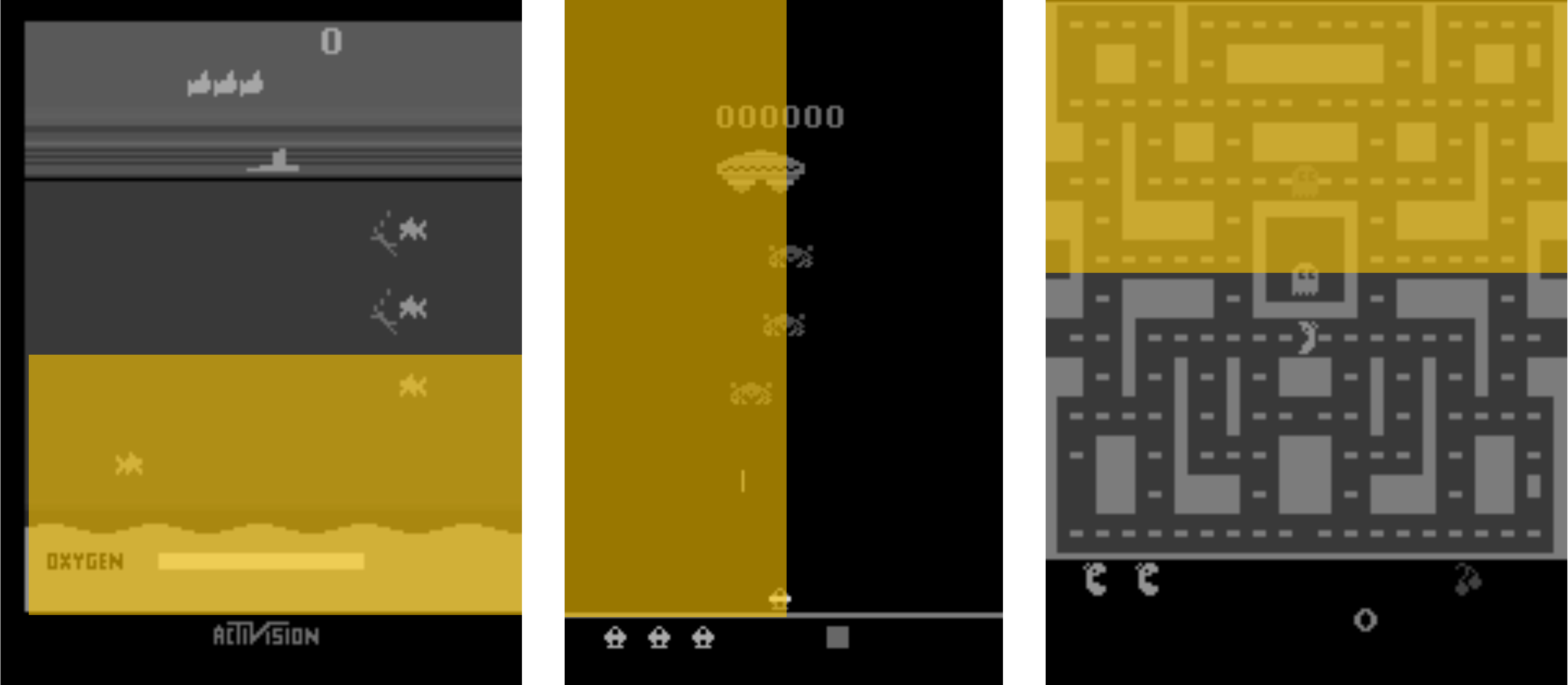}
\caption{Illustration of the restrictions placed on the rewards of the Atari 2600 environments in generalization experiments. The golden region represents the portion of the screen that the agent must be located in in order to receive reward. From left to right: Seaquest, Assault and Pacman.}
\label{fig:restricted_rewards}
\end{figure}

\begin{figure*}
\centering
\begin{subfigure}
\centering
\includegraphics[scale=0.3]{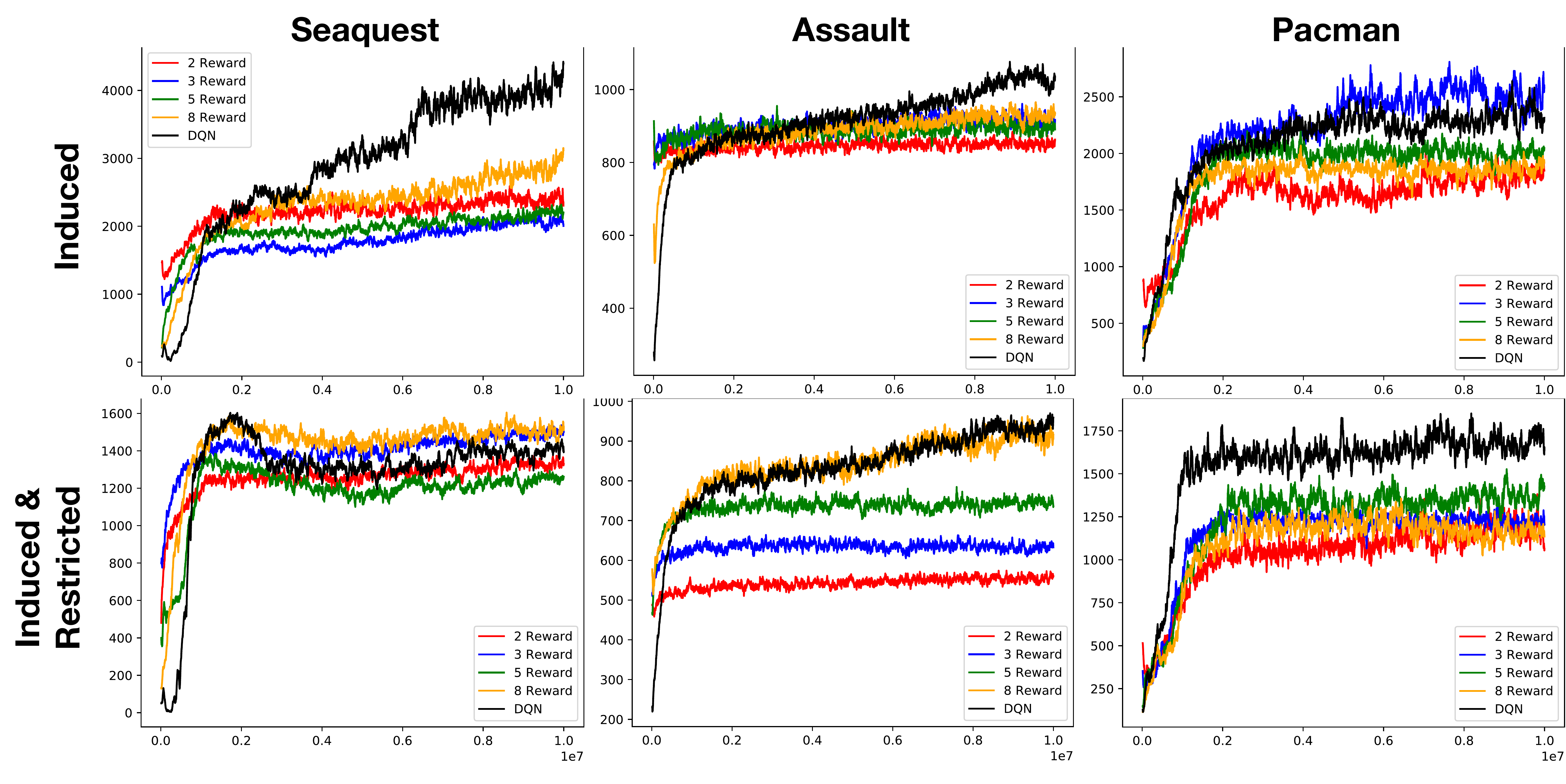}
\end{subfigure}
\caption{Each row depicts a comparison on various Atari 2600 games between learning control with our method's learned policies (labeled: 2 Reward, 3 Reward, 5 Reward and 8 Reward) in the induced MDP setting and the original original MDP (labeled: DQN). The top row depicts this comparison on the original Atari 2600 games. The bottom row shows the performance of our method's policies applied on modified versions of the games with restricted reward function (see Figure \ref{fig:restricted_rewards}).   %Performance of our method's learned policies on various Atari 2600 games in the induced MDP setting. Bottom row: Performance of our method's learned policies on restricted versions of the same Atari 2600 games. 
} \label{fig:meta_controller_results}
\end{figure*}

\section{Conclusion and Future Work}
In this work we presented and explored a novel formulation for additively-decomposing rewards into independently obtainable components. With empirical investigations, we showed that our disentanglement score is predictive of qualitatively interesting decompositions, that our algorithm is able to learn independently obtainable rewards for 3 Atari games better than a recent approach to disentangling states and policies, and that the policies optimal with respect to the learned rewards are useful in that they also obtain portions of the environment reward. With theoretical investigations, we showed that when our rewards are independently obtainable their optimal policies occupy non-overlapping states and that our gradient-based method for finding the reward decomposition yields saturated rewards in which each state's environment reward is allocated entirely to one learned reward.
%, and that in a class of MDPs there is a connection between our formulation and that of graph bottlenecks developed in spectral graph theory. 
%In future work, we would like to explore possible connections between our approach and
%to spectral graph theory more substantially and find possible links
%work  on eigenoptions \cite{machado2017eigenoption} and proto-value functions \cite{mahadevan2007proto}.

\section{Acknowledgements} This work was supported by a grant from Toyota Research Institute (TRI), and by a grant from DARPA’s L2M program. Any opinions, findings, conclusions, or recommendations expressed here are those of the authors and do not necessarily reflect the views of the sponsors.
\clearpage 

\nocite{langley00}

\newpage
\bibliography{iclr2019_conference}
\bibliographystyle{icml2019}

%%%%%%%%%%%%%%%%%%%%%%%%%%%%%%%%%%%%%%%%%%%%%%%%%%%%%%%%%%%%%%%%%%%%%%%%%%%%%%%
%%%%%%%%%%%%%%%%%%%%%%%%%%%%%%%%%%%%%%%%%%%%%%%%%%%%%%%%%%%%%%%%%%%%%%%%%%%%%%%
% DELETE THIS PART. DO NOT PLACE CONTENT AFTER THE REFERENCES!
%%%%%%%%%%%%%%%%%%%%%%%%%%%%%%%%%%%%%%%%%%%%%%%%%%%%%%%%%%%%%%%%%%%%%%%%%%%%%%%
%%%%%%%%%%%%%%%%%%%%%%%%%%%%%%%%%%%%%%%%%%%%%%%%%%%%%%%%%%%%%%%%%%%%%%%%%%%%%%%
\clearpage 
\appendix
\section{Algorithm Description} \label{app:algorithm}
\begin{algorithm}
Randomly initialize $\theta$ and $\phi_{1:n}$ for $R^\theta_{1:n}$ and $Q^{\phi_{1}},\ldots, Q^{\phi_{n}}$ respectively\;\\ 
Initialize empty replay buffer $\mathcal{D}$ \; \\
Randomly choose $I \in [n]$. \\
\While{True}{
Choose $\epsilon$-greedy action $a$ using $Q^{\phi_I}$. \\
Take action $a$ and observe state $s^\prime$ and reward $r$. \\
If state is terminal, re-sample $I$. \\
$\mathcal{D} = \mathcal{D} \cup (s, a, r, s^\prime)$\\
\For{$i \in [n]$}{
Sample minibatch $(s_t, a_t, r_t, s^\prime_t)_{t=1}^N$ from $\mathcal{D}$. \\
Replace each $r_t$ with $R_i(s^\prime_t)$. \\
Update $\phi_j$ using the Q-Learning algorithm.
}
Sample minibatch of starting states $(s_t)_{t=1}^N$ from $\mathcal{D}$. \\ 
\For{$i, j \in [n], k \in [N] $}{
Sample $T$-step trajectory $\tau_j^{i}$ starting from $s_k$ following $Q^{\phi_i}$. \\
Compute trajectory values under each $R_j^\theta$. 
}
Approximate $\nabla_\theta J_\text{disentangled}(\theta)$ as in Equation~\ref{eqn:gradient} using the sampled trajectory values. \\
$\theta \leftarrow \theta  + \eta \nabla_\theta J_\text{disentangled}(\theta)$. 
}
\caption{Learning Independently-Obtainable Rewards}\label{alg:low_level}
\end{algorithm} 

\section{Experimental Details}\label{app:experiment_details}

\paragraph{Data Preprocessing}
For Atari 2600 games we concatenate the last 3 frames before passing them to our networks. We do not do this for our gridworld experiments. We clip all Atari 2600 rewards between 0 and 1 and do not mark any states as terminal.

\paragraph{General}
All networks are trained for $10^7$ steps. All network training is done with a batch-size of 32. A single replay buffer is shared across DQN instances with a capacity of $10^5$. We annealed our epsilon greedy behavior policy from $\epsilon = 1$ to $\epsilon=0.01$ over the course of $10^6$ time-steps. We updated our Policy networks every $4$ time-steps and our Reward Decomposition networks every $20$ time-steps. For all experiments we use a discount value of $\gamma = 0.99$. 

\paragraph{Reward Decomposition Network}
The Reward Decomposition (RD) network was trained using the Adam optimizer with a learning rate of $5\cdot10^{-5}$. 

Our RD network maps states to vectors of $n$ decomposed rewards and consists of 3 convolutional layers followed by 2 fully connected layers. The convolutional layers have filter sizes: 8, 4, 3; numbers of filters: 32, 64, 64; strides: 4, 2, 1 and activations: relu, relu, relu. The fully connected layers have widths: 128, $n$ and activations: relu, softmax.

When approximating value functions as in Equation~\ref{eqn:gradient}, we roll out each policy for $10$ steps. We do not compute gradients through the weighting functions $\alpha_{i,j}$.

\paragraph{Policy Networks (DQNs)}
Our Policy networks are represented as dueling DQNs adapted from the OpenAI baselines repository.\footnote{https://github.com/openai/baselines} These networks each use a learning rate of $10^{-4}$, a target update frequency of $10^4$ time-steps. Each network consists of 3 convolutional layers followed by 2 fully connected layers. The convolutional layers have filter sizes: 8, 4, 3; numbers of filters: 32, 64, 64; strides: 4, 2, 1 and activations: relu, relu, relu. The fully connected layers have widths: 256, $|\mathcal{A}|$ and activations: relu, unit. 

\paragraph{ICF}
Our network for replicating ICF was adapted from the authors' git respository,\footnote{https://github.com/bengioe/implementations/tree/master/DL/ICF\_simple} leaving the majority of settings unchanged. The network, when applied to an image with $n$ channels, consists of an autoencoder with an auxiliary branch that outputs the latent state representation and policy distributions. The encoder consists of 3 convolutional layers with filter sizes: 3, 3, 3; numbers of filters: 32, 64, 64; strides: 2,2,2 and activations: relu, relu, relu. The decoder consists of 3 transpose-convolutional layers with filter sizes: 3,3,3; numbers of filters 64, 32, n; strides: 2,2,2 and activations: relu,relu, unit. The auxiliary network branch has a single fully-connected layer with width 32 and relu activation before splitting two branches for latent factors and policy distributions respectively. 

\section{Mathematical Results} \label{app:mathematical_results}
\paragraph{Relationship between $J_\text{indep}$ and $J_\text{nontriv}$}
We next illustrate that the $J_\text{indep}$ and $J_\text{nontriv}$ terms defined in Equations~\ref{eq:J_indep} and \ref{eq:J_nontriv} can be related to each other by a function that does not directly depend on the choice of reward decomposition, but on the behavior of the agents resulting from the decomposition. 

For a set of policies $\pi_1, \ldots, \pi_n$, we define this function as:
\begin{equation*}
\mathbb{V}(\pi_{1:n}) = \mathbb{E}_{s \sim \mu}\left[\sum_{i=1}^n V^{\pi_i}(s)\right]
\end{equation*}
which intuitively captures the ``total value'' obtained by a the policies $\pi_1, \ldots, \pi_n$.

\begin{lemma}\label{lemma:relationship}
If $\alpha_{i,j}(s) = C$ \ for all $s \in \mathcal{S}$ and $i,j \in [n]$ it follows that:
\begin{equation*}
J_\text{indep} + J_\text{nontriv} = C \cdot \mathbb{V}(\pi_{1:n}^*)
\end{equation*}
\end{lemma}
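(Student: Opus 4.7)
The plan is to combine the two index sums into one and then recognize the result as the expected sum of the environment-reward value functions of the $\pi_i^*$. Since the hypothesis forces $\alpha_{i,j}(s)$ to the constant $C$ for all $i,j,s$, the factor pulls out of the expectation and we obtain
\[
J_\text{indep} + J_\text{nontriv} = C \cdot \mathbb{E}_{s \sim \mu}\!\left[\sum_{i \neq j} \subvalue{i}{\pi_j^*}(s) + \sum_{i=1}^n \subvalue{i}{\pi_i^*}(s)\right] = C \cdot \mathbb{E}_{s \sim \mu}\!\left[\sum_{i=1}^n \sum_{j=1}^n \subvalue{i}{\pi_j^*}(s)\right],
\]
so the task reduces to showing that the double sum equals $\sum_j V^{\pi_j^*}(s)$.

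The key step is a linearity-of-value observation. Fix any policy $\pi$; by the reward decomposition (Equation~\ref{eq:reward_factorization}) and linearity of expectation,
\[
V^{\pi}(s) = \mathbb{E}\!\left[\sum_{t=0}^\infty \gamma^t \mathcal{R}(s_t) \,\Big|\, \pi, s_0 = s\right] = \sum_{i=1}^n \mathbb{E}\!\left[\sum_{t=0}^\infty \gamma^t \mathcal{R}_i(s_t) \,\Big|\, \pi, s_0 = s\right] = \sum_{i=1}^n \subvalue{i}{\pi}(s).
\]
I would apply this identity once for each $\pi = \pi_j^*$, swap the order of summation in the double sum, and substitute into the display above. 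This yields $\sum_{i,j} \subvalue{i}{\pi_j^*}(s) = \sum_j V^{\pi_j^*}(s)$ pointwise in $s$, which after taking the expectation under $\mu$ gives $\mathbb{V}(\pi_{1:n}^*)$ by definition.

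There is no real obstacle here; the lemma is essentially a bookkeeping identity whose only content is the additive decomposition of the reward and the resulting additive decomposition of $V^\pi$. The only things worth being careful about are (i) making sure the $i \neq j$ terms from $J_\text{indep}$ combine cleanly with the $i = j$ terms from $J_\text{nontriv}$ to recover the full $n \times n$ sum, and (ii) applying the linearity identity under a common expectation (which is valid because $\sum_t \gamma^t \mathcal{R}_i(s_t)$ is absolutely summable uniformly in $i$ whenever $\mathcal{R}$ is bounded, so Fubini justifies splitting the expectation across $i$).
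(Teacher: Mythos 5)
Your proposal is correct and follows essentially the same route as the paper: merge the $i \neq j$ and $i = j$ terms into the full double sum, pull out the constant $C$, and use the additivity $\sum_i \subvalue{i}{\pi}(s) = V^\pi(s)$ (a direct consequence of Equation~\ref{eq:reward_factorization}) to recover $\mathbb{V}(\pi_{1:n}^*)$. The only difference is that you spell out the linearity/Fubini justification that the paper leaves implicit in its ``simple algebraic manipulations.''
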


\begin{proof}
The statement can be shown by simple algebraic manipulations:
\begin{equation*}
\begin{aligned}
&J_\text{indep} + J_\text{nontriv} \\
&= \mathbb{E}_{s \sim \mu}\left[\sum_{i \neq j} \alpha_{i,j}(s) \subvalue{i}{\pi_j^*}(s) + \sum_{i=1}^n \alpha_{i,i}(s) \subvalue{i}{\pi_i^*}(s)\right] \\
%&= \mathbb{E}_{s \sim \mu}\Bigg[\sum_{i=1}^{n}\left(\sum_{j=1}^n \alpha_{i,j}(s)\subvalue{i}{\pi_j^*}(s) \right) - \alpha_{i,i}(s)\subvalue{i}{\pi_i^*}(s) \\ 
%& \ \ \ \ \ \ \ \ \ \ \ \ \ \ \ \  + \sum_{i=1}^n \alpha_{i,i}(s)\subvalue{i}{\pi_i^*}(s)\Bigg] \\
&= C \cdot \mathbb{E}_{s \sim \mu}\left[\sum_{i=1}^n \sum_{j=1}^n \subvalue{i}{\pi_j^*}(s)\right] \\
&= C \cdot \mathbb{E}_{s \sim \mu} \left[\sum_{j=1}^n V^{\pi^*_j}(s)\right] = C \cdot \mathbb{V}(\pi^*_{1:n})
\end{aligned}
\end{equation*}
as needed.
\end{proof}

We can then define the ``sensitivity'' of an MDP's total value with respect to a change in policies as:
\begin{equation*}
S(\pi_{1:n}, \pi_{1:n}^\prime) = |\mathbb{V}(\pi_{1:n}) - \mathbb{V}(\pi_{1:n}^\prime)|.
\end{equation*}

%This function is of interest because it can be used to quantify an MDP's ``sensitivity'' to different decompositions of its reward. This sensitivity can be defined as $S(\pi_{1:n}, \pi^{\prime}_{1:n}) = \left|\mathbb{V}(\pi_{1:n}) - \mathbb{V}(\pi^\prime_{1:n})\right|.$ Moreover, Lemma~\ref{lemma:relationship} indicates that in MDPs with low sensitivity, $J_\text{indep}$ are strongly inversely-correlated. In our experiments, we observed that rewards learned by our decomposition method tended to be saturated, which is to say that for each $s \in \mathcal{S}$ there was an index $i(s)$ that received all the environment reward: $R_{i(s)} = \mathcal{R}(s)$. Using this notion of sensitivity we can speculate on why we observed these results with the following theorem.

% \textbf{\emph{Do not put content after the references.}}
% %
% Put anything that you might normally include after the references in a separate
% supplementary file.

% We recommend that you build supplementary material in a separate document.
% If you must create one PDF and cut it up, please be careful to use a tool that
% doesn't alter the margins, and that doesn't aggressively rewrite the PDF file.
% pdftk usually works fine. 

% \textbf{Please do not use Apple's preview to cut off supplementary material.} In
% previous years it has altered margins, and created headaches at the camera-ready
% stage. 
%%%%%%%%%%%%%%%%%%%%%%%%%%%%%%%%%%%%%%%%%%%%%%%%%%%%%%%%%%%%%%%%%%%%%%%%%%%%%%%
%%%%%%%%%%%%%%%%%%%%%%%%%%%%%%%%%%%%%%%%%%%%%%%%%%%%%%%%%%%%%%%%%%%%%%%%%%%%%%%

\end{document}